\documentclass[]{style/iromlab}

\usepackage{multicol}
\usepackage{xspace}
\usepackage{amsmath, amsfonts, amssymb, amsthm}
\usepackage{enumerate}
\usepackage[inline]{enumitem}
\usepackage{mathtools}
\usepackage{subcaption}
\usepackage{graphicx}
\usepackage{longtable,tabularx}
\usepackage{placeins} 
\usepackage{float}
\usepackage{multirow}
\usepackage{bbm}
\usepackage{wrapfig}
\usepackage{threeparttable}
\usepackage{balance}
\usepackage[ruled,algo2e]{algorithm2e}
\usepackage{algpseudocode}
\usepackage{adjustbox}
\usepackage{booktabs}
\usepackage{bm}
\usepackage{etoolbox}
\usepackage{microtype}
\usepackage[title]{appendix}
\usepackage{units}
\usepackage{cleveref}
\usepackage{tcolorbox}

\DeclareDocumentEnvironment{example}{}{\noindent\textbf{Running example:}\itshape}{}

\usepackage{ifthen}
\newboolean{include-notes}
\newboolean{include-new}
\newboolean{include-remove}
\setboolean{include-notes}{true}
\setboolean{include-new}{false}
\setboolean{include-remove}{false}

\usepackage[dvipsnames]{xcolor}
\usepackage[normalem]{ulem}
\newcommand{\justin}[1]{\ifthenelse{\boolean{include-notes}}{\textcolor{orange}{\textbf{Jaime:} #1}}{}}

\newcommand{\princeton}[1]{\ifthenelse{\boolean{include-notes}}{\textcolor{orange}{#1}}{}}

\newcommand*{\algname}{\textcolor{orange}{\textsc{AnyRank}}\xspace}
\newcommand{\Hcomp}{\mathbb{H}}
\newcommand{\Hyp}{\mathcal{H}}
\newcommand{\A}{\textbf{A}}
\newcommand{\B}{\textbf{B}}
\newcommand{\C}{\textbf{C}}
\newcommand{\D}{\textbf{D}}
\newcommand{\E}{\textbf{E}}
\newcommand{\F}{\textbf{F}}
\newcommand{\G}{\textbf{G}}
\newcommand{\partialtxt}[1]{\textcolor[HTML]{B8860B}{#1}}

\newtheorem{theorem}{Theorem}

\newtheorem{corollary}{Corollary}

\theoremstyle{definition}
\newtheorem{remark}{Remark}

\newbool{extended}
\setbool{extended}{false}

\makeatletter
\newcommand{\longdash}[1][2em]{%
  \makebox[#1]{$\m@th\smash-\mkern-7mu\cleaders\hbox{$\mkern-2mu\smash-\mkern-2mu$}\hfill\mkern-7mu\smash-$}}
\makeatother
\newcommand{\omitskip}{\kern-\arraycolsep}

\tcbset{
  promptstyle/.style={
    colback=gray!10,
    colframe=gray!60,
    boxrule=0.5pt,
    arc=2pt,
    outer arc=2pt,
    left=4pt,
    right=4pt,
    top=4pt,
    bottom=4pt,
    fonttitle=\bfseries,
    sharp corners=south,
  }
}

\tcbset{
  promptstyle_prompt_iuq/.style={
    colback=prompt_color!10,
    colframe=prompt_color!60,
    coltitle=white,
    boxrule=1.0pt,
    arc=2pt,
    outer arc=2pt,
    left=4pt,
    right=4pt,
    top=4pt,
    bottom=4pt,
    fonttitle=\bfseries,
  }
}

\tcbset{
  promptstyle_claude/.style={
    colback=claude_color!10,
    colframe=claude_color!60,
    coltitle=black,
    boxrule=1.0pt,
    arc=2pt,
    outer arc=2pt,
    left=4pt,
    right=4pt,
    top=4pt,
    bottom=4pt,
    fonttitle=\bfseries,
  }
}

\tcbset{
  promptstyle_deepseek/.style={
    colback=deepseek_color!10,
    colframe=deepseek_color!60,
    coltitle=black,
    boxrule=1.0pt,
    arc=2pt,
    outer arc=2pt,
    left=4pt,
    right=4pt,
    top=4pt,
    bottom=4pt,
    fonttitle=\bfseries,
  }
}

\tcbset{
  promptstyle_o3_mini/.style={
    colback=o3_mini_color!10,
    colframe=o3_mini_color!60,
    coltitle=black,
    boxrule=1.0pt,
    arc=2pt,
    outer arc=2pt,
    left=4pt,
    right=4pt,
    top=4pt,
    bottom=4pt,
    fonttitle=\bfseries,
  }
}

\author[1*]{Apurva Badithela}
\author[2*]{David Snyder}
\author[1]{Tenny Yin}
\author[1]{Jade Sceats}
\author[2]{George Pappas}
\author[2]{Nikolai Matni}
\author[3]{Masha Itkina}
\author[3]{Haruki Nishimura}
\author[1]{Anirudha Majumdar}

\affiliation[1]{Princeton University}
\affiliation[2]{University of Pennsylvania}
\affiliation[3]{Toyota Research Institute}
\contribution[*]{Equal contribution.}

\setlist[enumerate]{align=left,leftmargin=8mm,labelsep*=0pt}
\setlist[enumerate,1]{label=(\alph*), ref=(\alph*)}
\setlist[enumerate,2]{label=(\roman*), ref=(\roman*),leftmargin=9mm}

\begin{document}

\title{\algname:\\
{\LARGE Reliable Anytime Policy Ranking via Sequential Testing}}

\abstract{
Due to the high cost of hardware experiments, robot policies are predominantly evaluated by comparing their empirical performance on a limited number of hardware trials. However, finite-sample statistical uncertainty can lead to erroneous conclusions, especially when ranking multiple policies using a fixed batch of evaluation data. We propose \algname: a \emph{sequential} method for multi-policy ranking that (i) \emph{adaptively allocates trials} to hard-to-separate policies, (ii) provides (partial) rankings at \emph{any stopping time} while bounding the probability of an incorrect ranking, and (iii) optionally \emph{exploits prior information} to prioritize hardware evaluations more efficiently. This prior information could be informed from expert intuition, a small set of held-out hardware evaluations, or from policy rollouts in a simulator or an action-conditioned video world model. Notably, the prior can be imperfect; while a strong prior can provide efficiency gains on hardware, any inaccuracies in prior policy rankings do not invalidate the statistical assurances of \algname. We achieve the aforementioned properties through novel theoretical contributions that combine sequential testing with multiple-hypothesis graphical testing approaches. Experiments in simulation and real-world policy evaluation benchmarks show that \algname requires up to 50\% fewer hardware evaluations compared to existing statistically rigorous approaches, while guarding against the incorrect rankings produced by heuristic methods used by state-of-the-art benchmarks. 
}

\keywords{
Policy Evaluation, Statistical Reliability, Sequential Testing
}

\website{
https://abadithela.github.io/anyrank-page/  
}
{
abadithela.github.io/anyrank-page   
}

\code{
https://github.com/irom-princeton/multi_hypothesis_test 
}
{
github.com/anyrank  
}


\maketitle



\section{Introduction}
\label{sec:intro}
Robot learning faces an evaluation challenge. As policies become increasingly capable of generalizing to a broad range of tasks, environments, and embodiments, rigorous evaluation and benchmarking become critical bottlenecks~\cite{kress2024robot,atreya2025roboarena,li2024evaluating,anwar2025efficient,barreiros2026careful,gao2025taxonomy}. In particular, the analytical opacity of state-of-the-art (SOTA) policies places a significant burden on \emph{empirical} evaluation. 
 How can we identify the most critical design choices for our policy architecture, data mixture, or training algorithm? Can we confidently conclude if improved performance resulted from a particular intervention rather than another? How can we determine if our proposed method is genuinely better than baselines? Such questions demand \emph{reliable} policy \emph{ranking} methods using empirical measurements of performance.  

In contrast to domains such as computer vision and natural language processing, evaluation in robotics is fundamentally limited by the time- and labor-intensive nature of hardware experiments. As a result, policy performance is usually summarized with a \emph{point estimate} (e.g., empirical mean) of task-relevant performance metrics computed over a small number ($20-30$) of hardware rollouts \cite{kress2024robot}. However, comparing policies in this way is inherently unreliable due to finite-sample statistical variations in tasks, environments, embodiments, and policy stochasticity. For instance, the ranking of two closely competing policies can be easily flipped if one policy achieves a few more successful rollouts than the other by chance. This risk of incorrect inferences becomes particularly pronounced when a large number of policies must be ranked using a fixed batch of evaluation data. 
\begin{figure}[t]
    \centering
    \includegraphics[width=0.8\linewidth]{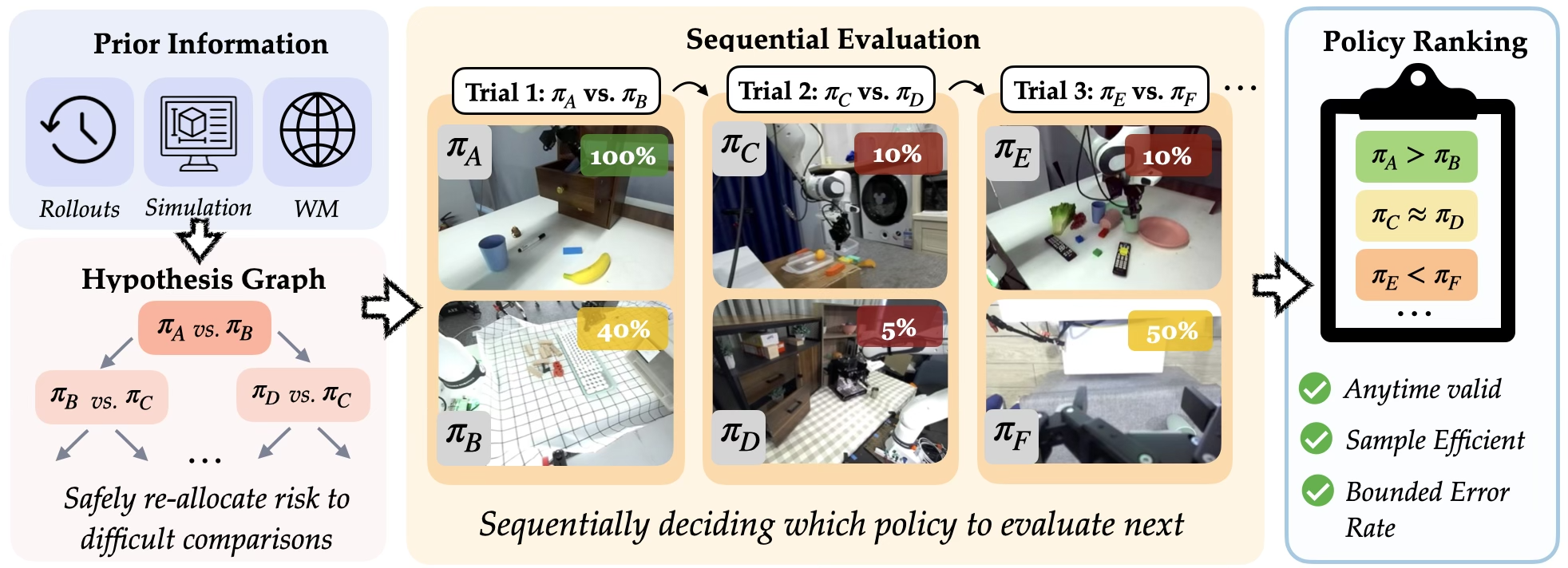}
    \caption{\textbf{Reliable Policy Ranking with \algname.} Prior information is leveraged to form a weighted multi-hypothesis graph, which in turn informs an active evaluation protocol to minimize expected sample complexity. The resulting ranking provably maintains a pre-specified error rate.}
    \label{fig:banner}
\end{figure}
Commonly used approaches~\cite{welch_generalization_1947} for producing confidence intervals or p-values often make asymptotic assumptions that are not valid for a finite number of samples. Moreover, comparisons based on confidence intervals tend to have less statistical power than direct hypothesis testing~\cite{greenland2016statistical}, leading to highly indeterminate rankings with many ties. On the other hand, preference-based ranking methods such as Bradley-Terry models~\cite{bradley-terry} tend to result in definitive rankings, but can have high error rates in the non-asymptotic regime, as we show in our experiments. While simulators and world models offer promising pathways to scalable evaluation~\cite{yin2026playworld,guo2025ctrl,li2024evaluating,team2025evaluating,ranawaka2026simfoundry,yang2026robolab}, the resulting inferences do not necessarily reflect real-world performance. As such, estimates from simulation can serve as imperfect priors that must be combined with gold-standard hardware trials for reliable policy ranking~\cite{badithela_reliable_2025}. 

We posit that an ideal framework for policy ranking in robotics should satisfy three primary desiderata. First, it should produce \emph{statistically reliable} conclusions even with the small numbers of experimental trials that are common in robotics. Specifically, we must bound the probability of an incorrect ranking to any given practitioner-specified value. The resulting ranking should be \emph{anytime-valid} by producing a (possibly partial) ranking at any decision time, while allowing for the possibility of continued evaluations without the risk of p-hacking~\cite{john2012measuring}. Second, it should be \emph{sample efficient} by adaptively and actively allocating trials to hard-to-separate policies. Finally, an ideal policy ranking method should allow for the \emph{incorporation of priors} (e.g., from researcher intuition from preliminary experiments in simulation, held-out hardware evaluations, or rollouts in a world model) to improve sample efficiency while guarding against imperfectly specified priors. 

\noindent\textbf{Statement of Contributions.} 
We propose \algname, a framework for policy ranking that satisfies the properties highlighted above: statistical reliability, anytime validity, sample efficiency, and safe incorporation of priors. In order to achieve this, we formulate policy ranking as a \emph{multiple-hypothesis testing} problem where data is collected \emph{sequentially}. Our key theoretical contribution is a novel combination of safe anytime-valid inference~\cite{ramdas_game-theoretic_2023} with graphical procedures for multiple-hypothesis testing~\cite{bretz2009graphical}. The resulting algorithm \emph{actively} allocates evaluations to policies that are more difficult to distinguish statistically, adaptively assigns the global confidence budget to different hypotheses while bounding the probability of incorrect rankings (i.e., family-wise error rate control~\cite{hartog_family-wise_2025}), and initializes hyperparameters in the testing procedure to leverage prior information. Extensive experiments on real-world policy evaluation benchmarks such as RoboArena~\cite{atreya2025roboarena} and the TRI-LBM dataset~\cite{barreiros2026careful} demonstrate that our method requires up to 50\% fewer hardware evaluations compared to existing statistically rigorous approaches, while guarding against the incorrect rankings produced by imperfect priors or heuristic methods used by state-of-the-art benchmarks. 

\section{Related Work}
\label{sec:related}

\textbf{Heuristic Approaches.}
For large-scale hardware evaluation,~\citet{atreya2025roboarena} propose a distributed evaluation approach, using Bradley-Terry models~\cite{bradley-terry} to rank policies from human preferences. This approach does not provide statistical guarantees of a correct ranking and can have high rates of error in practice (\Cref{sec:evaluation}). 
In contrast, our approach provides policy rankings that are finite-sample valid, and can use a prior, e.g., from heldout evaluations, simulation~\cite{nasiriany_robocasa_2024,yang2026robolab,li2024evaluating,kim2026molmospaces,jain2025polaris}, or a world model~\cite{team2025evaluating,guo2025ctrl,tseng2026sc3,quevedo2025worldgymworldmodelenvironment,yin2026playworld}, to inform policy evaluations on hardware.

\noindent\textbf{Sequential Hypothesis Testing.}
Our work builds upon ideas from the Neyman-Pearson statistical testing framework \cite{neyman_ix_1997}; for policy comparison, this framework has primarily been applied in fixed-sample-size contexts (i.e., \emph{batch} evaluation methods) \cite{barnard_significance_1947, welch_generalization_1947, bickel_mathematical_2015}. Allowing the evaluator to adaptively stop trials (termed anytime stopping \cite{siegmund_sequential_1985}) can invalidate statistical assurances by inflating the Type-I error (i.e., false positive) rate unless appropriate corrections are applied \cite{ramdas_game-theoretic_2023}; this phenomenon underlies many common forms of p-hacking \cite{greenland2016statistical}. Further, robotics literature typically relies on only asymptotically valid statistical procedures, even though hardware evaluation budgets are often too small for such approximations to be reliable \cite{valle_evaluating_2026, ullman_challenges_2021, rana_benchmark_2020, lingg_building_2023, correia_exploring_2018}.
In contrast, sequential \cite{siegmund_sequential_1985}, nonasymptotic approaches \cite{wasserman_all_2006, wainwright_high-dimensional_2019, samworth_modern_2026} allow the evaluator \emph{to choose when to stop evaluations} while ensuring control of Type-I error. This data adaptivity promises significant and interpretable improvement in sample efficiency via allowing feedback from the current evaluation evidence \cite{wald_sequential_1945, wald_optimum_1948, lorden_nearly-optimal_1977, lai_asymptotically_1985}. Recent work in safe anytime-valid inference (SAVI) \cite{howard_time-uniform_2021, ramdas_admissible_2022, ramdas_game-theoretic_2023, lindon_anytime-valid_2025} constructs robust families of stochastic processes that implicitly account for anytime stopping biases in general, nonparametric settings, allowing the evaluator to adapt online to save evaluation effort under general progress metrics. Building on SAVI methods, efficient, sequential and non-asymptotic pairwise robot policy comparison methods have been developed for general progress metrics~\cite{snyder2026beyond}.

\noindent\textbf{Rigorous Robot Policy Evaluation.}
Rigorous robot policy evaluation has emphasized constructing confidence intervals to estimate the mean performance of a single policy \cite{badithela_reliable_2025,luo_sim2val_2025,vincent_how_2024}, or two-policy comparison~\cite{snyder2025your,snyder2026beyond}. Ranking multiple policies, however, is a composite, multi-hypothesis testing problem which prior work only considers via inefficient Bonferroni (i.e., union bound) corrections \cite{vincent_how_2024, snyder2025your, snyder2026beyond, barreiros2026careful, lin2026systematic}. Such corrections can significantly limit statistical power because the number of pairwise hypotheses scales quadratically in the number of policies. We address this limitation using graphical hypothesis testing~\cite{bretz2009graphical,wiens_fixed_2003}, which improves power by reallocating risk budgets across hypotheses. While graphical procedures are well established for batch testing, we extend them to adaptive sampling and sequential evaluation. Together, these contributions improve efficiency over classical Bonferroni corrections while maintaining statistical validity.


\section{Problem Formulation: Policy Ranking as Hypothesis Testing}
\label{sec:preliminaries}




Given a set of $M$ robot policies, our goal is to rank the policies according to their mean performance on a (complicated and unknown) real-world task and environment distribution \(\mathcal{D}\). We assume that policy evaluations are carried out on i.i.d. samples \(e \sim \mathcal{D}\) of tasks and environments. Let \(R\) be a shared, real-valued and bounded metric (as in \cite{snyder2026beyond}) of task performance (e.g., binary success or partial progress score). We denote by \(r_j(e)\) the performance of policy \(\pi_j\) on environment~\(e\). We let $\mu_j := \mathbb{E}_{e \sim \mathcal{D}}[r_j(e)]$ denote the (unknown) true expected value of the performance metric for $\pi_j$. Thus, policy ranking refers to reliably ordering $\{\mu_j\}_{j=1}^M$ using a small number of evaluation rollouts. 

\textbf{Pairwise Policy Comparison.}
Consider a pair of robot policies \(\pi_j, \pi_{j'}\) with (unknown) true means $\mu_j, \mu_{j'}$. To rank them, we require two one-sided hypothesis tests:
\begin{equation*}
    \begin{split}
        \text{(1) }\Hyp^{0}_{j,j'}\text{ (null): }& \mathbb{E}_{e \sim \mathcal{D}}[r_j(e)] \geq \mathbb{E}_{e \sim \mathcal{D}}[r_{j'}(e)] \text{, } \Hyp^{1}_{j,j'}\text{ (alternative): } \mathbb{E}_{e \sim \mathcal{D}}[r_j(e)] < \mathbb{E}_{e \sim \mathcal{D}}[r_{j'}(e)]; \\
        \text{(2) }\Hyp^{0}_{j',j}\text{ (null): }& \mathbb{E}_{e \sim \mathcal{D}}[r_{j'}(e)] \geq \mathbb{E}_{e \sim \mathcal{D}}[r_j(e)] \text{, } \Hyp^{1}_{j',j}\text{ (alternative): }\mathbb{E}_{e \sim \mathcal{D}}[r_{j'}(e)] < \mathbb{E}_{e \sim \mathcal{D}}[r_j(e)].
    \end{split}
\end{equation*}
If test (1) rejects its null hypothesis, then we can conclude that $\mu_j < \mu_{j'}$, and vice-versa if test (2) rejects. Due to uncertainty in a limited number of rollouts, the probability of incorrect inference from hypothesis testing should be bounded with a pre-specified risk level \(\alpha \in (0,1)\). Concretely, if $\Hyp^{0}_{j,j'}$ is indeed true, then a test procedure controls the Type-1 error at rate \(\alpha\) if the probability of the test falsely concluding that $\Hyp^{1}_{j,j'}$ is true is bounded by $\alpha$. For a two-sided test at level \(\alpha\), we run both one-sided tests at budget \(\alpha\), as it is highly unlikely for two policies to have identical mean performance over complex distribution of tasks and environments (see~\Cref{rem:onesided}).

\textbf{Multiple Policy Comparison.} For multi-policy ranking, a sufficient condition is to consistently decide $K = \binom{M}{2}$ pairwise comparisons, yielding a multi-hypothesis testing problem $\Hcomp$. A ranking is \emph{complete} if all policy pairs are correctly decided, and \emph{partial} if a fraction of pairs are correctly ranked. The probability of a test incorrectly deciding at least one pair is the family-wise error rate (FWER). Standard approaches to control FWER split the risk budget between all pairwise comparisons~\cite{barreiros2026careful}, which  substantially reduces statistical power because the number of pairwise tests grows quadratically in \(M\), and hypotheses with smaller risk budgets require more hardware evaluations. 

\textbf{Problem Statement.} Given $M$ robot policies and risk level \(\alpha^*\), the goal is to rank them in as few evaluations as possible while maintaining high statistical power and FWER control. We seek a sampling and stopping test procedure \(S^*\) that balances expected sample complexity \(\mathbb{E}[T](S)\) and expected statistical power (\(1-\mathbb{E}[F](S)\)) while maintaining FWER control at \(\alpha^*\): 
\begin{equation}
    \label{multi_objective_opt}
        S^* = \arg \inf_{S \in \mathcal{S}}  \hspace{2mm} \bigg[ \mathbb{E}[T](S) + \lambda \cdot \mathbb{E}[F](S) \bigg] \quad \text{s.t.} \hspace{2mm} \text{FWER}(S) \leq \alpha^*,
\end{equation}
where \(\mathbb{E}[F](S)\) is the expected fraction of pairwise hypotheses that the test procedure \(S\) will fail to reject. While~\Cref{multi_objective_opt} is difficult to solve in general, our goal is to design a practical evaluation framework that approaches the optimal tradeoff while strictly controlling FWER. This requires an \emph{active} evaluation strategy: different pairwise hypotheses may require different amounts of evidence to reach a decision, and choosing which policy to evaluate next is important for efficient policy ranking.

\section{Method: Multi-Policy Ranking via Sequential Graphical Testing} 
\label{sec:Method}
We propose~\algname~as a statistically rigorous policy ranking procedure that decomposes the ranking problem into two fundamental components. First, the problem of multi-policy comparison is reduced to a structured set of pairwise policy comparison decisions using \emph{graphical procedures} for principled multi-hypothesis correction. Our first novel contribution is providing a general framework to incorporate prior knowledge (e.g., from simulation) in the graph structure to accelerate policy ranking. Next, evaluation of pairwise comparisons is addressed using the sequential, nonparametric evaluation procedures of \citet{snyder2026beyond}. To combine these procedures requires carefully reconciling \emph{batch} graphical procedures with \emph{sequential} evaluation methods to ensure FWER control. Proving that~\algname~indeed maintains FWER control is our second main technical contribution.



\textbf{Graphical Methods.}
Graphical testing procedures~\cite{bretz2009graphical} are SOTA for \emph{batch} multiple hypothesis testing. Each hypothesis is initially assigned a fraction of the global risk budget \(\alpha^*\), $\alpha^{[i]}_0$, which is reallocated to other hypotheses upon rejection. Second, it specifies the \emph{reallocation structure} $\{w^{[ii']} \geq 0: i,i' \in \{1, ..., K\}\}$, such that when any hypothesis $\mathcal{H}_i$ is rejected, the budget assigned to hypothesis $\mathcal{H}_i$ is recycled to increase the error budgets of the remaining hypotheses: 
\begin{equation}
\label{eq:update}
\alpha^{[i']} \leftarrow \alpha^{[i']} + \alpha^{[i]}w^{[ii']},
\qquad
w^{[i'i'']} \leftarrow \frac{w^{[i'i'']} + w^{[i'i]}w^{[ii'']}}{1 - w^{[i'i]}w^{[ii']}},
\end{equation}
where $w$: $\sum_{i'} w^{[ii']} = 1, ~\forall i \in \{1,\dots,K\}$. \Cref{fig:graph_propagation} illustrates risk reallocation in the case of a three-policy ranking problem.

\textbf{Prior Knowledge.} We incorporate prior estimates of policy performance differences into both the initial risk allocation and the graph weights for the multitest hypothesis graph (see~\Cref{sec:graph_instantiation}). Specifically, a softmax-style allocation assigns larger risk budgets to policy pairs predicted to be easier to distinguish, enabling faster decisions with fewer hardware trials and earlier risk reallocation to more difficult hypotheses. We also consider two ablations of the graphical structure, controlled by the softmax inverse temperature parameter \(\gamma\): \(\gamma=0\) yields a uniform risk allocation that ignores prior information, while \(\gamma=\infty\) yields a chain graph that uses the prior to order hypotheses from easiest to hardest to distinguish. In the batch setting, these correspond to the special cases of Holm's procedure and fixed-sequence testing, respectively (see~\Cref{sec:realloc} for details).

\begin{figure}[t]
    \centering
    \includegraphics[width=0.8\linewidth]{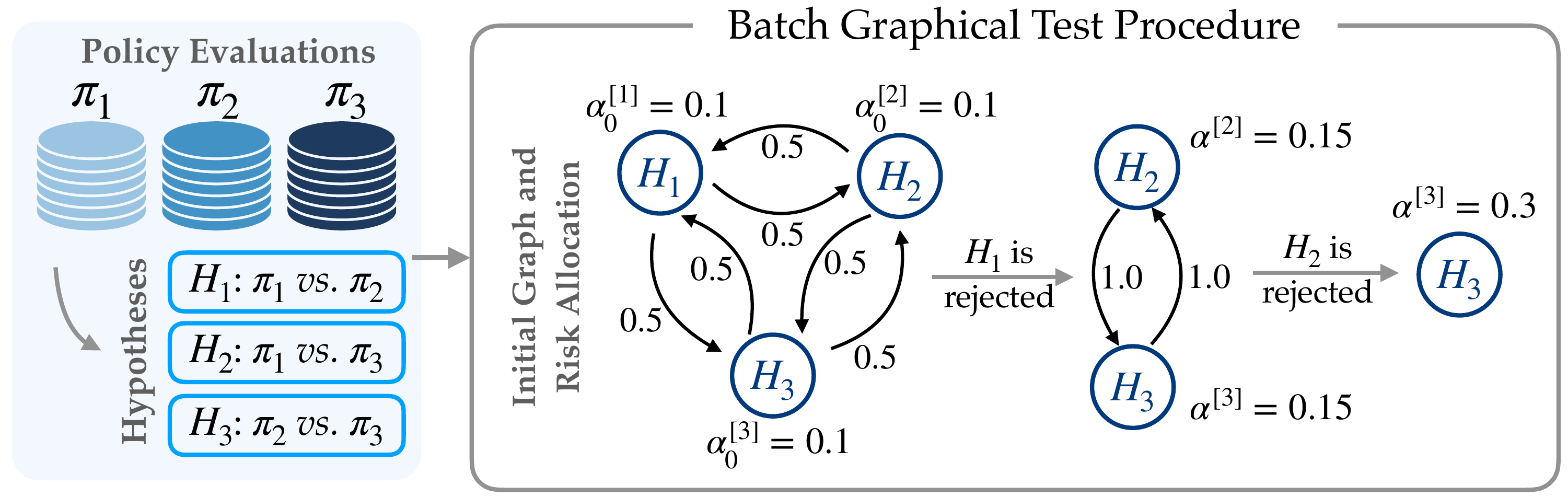}
    \caption{Graph propagation and risk reallocation as hypotheses are decided at FWER bound of $\alpha^* = 0.3$. This is a batch example, with all evaluations collected before graphical testing commences.}
    \label{fig:graph_propagation}
\end{figure}

\textbf{Robust, General, and Sequential Pairwise Comparison.}
\label{method:pairwise_comparisons}
\citet{snyder2026beyond} recently presented a SOTA \emph{sequential} method (N-SCORE) for general two-policy comparison under bounded performance measures that guarantees Type-I error control at a tunable level $\alpha$. 
N-SCORE is a safe anytime-valid inference (SAVI) procedure \cite{ramdas_game-theoretic_2023} for policy comparison; as such, it maintains a canonical p-process $(p_t)_{t \geq 0}$ (\cite{ramdas_admissible_2022}, Proposition 12.1): a sequence of p-values that gives a valid Type-I error upper bound at all possible stopping times. This p-process allows the practitioner to `peek' at the intermediate evaluation results for the atomic comparison and wait just until $p_t < \alpha_i$ is achieved before stopping the comparison -- this is the robust feedback mechanism which will enable~\algname~to utilize \emph{efficient active evaluation} online. Procedurally, for each pairwise comparison $\{\mathcal{H}_i\}_{i=1}^K$ we will instantiate an N-SCORE test which will, from any finite dataset of $t$ evaluations, induce a set of anytime p-values $\{p_{i, t}\}_{i=1}^K$ corresponding to the evidence against each atomic null hypothesis. 


\begin{algorithm2e}[t]
\caption{(Informal) Multi-Hypothesis Active Policy Ranking with \algname}
\label{alg:multitest}
\KwIn{Type-I error limit $\alpha^*$, sampling rule $S(\cdot, \cdot)$, prior means \(\{\hat{\mu}_1, \ldots, \hat{\mu}_M\}\)\;}
\KwOut{Ranking decisions $D_{t}$ and evaluation data $\mathcal{F}_t$\;}
\emph{Initialize graph} $G$, set $G_{sim}=G$\tcp*[r]{(\Cref{eq:init})}

\While(\tcp*[f]{(\Cref{eqn:advantage_by_policy})}){$\max_j \hat{p}_t(j) > 0$}{
    Actively sample policy to eval: $j_t \gets S(\mathcal{F}_t, G_{sim})$\tcp*[r]{(\Cref{eqn:hedged_advantage_following})}
    Run policy $j_t$ on $e_t \sim \mathcal{D}$: $r_{j_t}(e_t) \gets \text{Rollout}(e_t, \pi_{j_t})$\;
    Update filtration $\mathcal{F}_{t+1} = \mathcal{F}_t~\cup~\{t, r_{j_t}\}$\;
    Update anytime-valid p-values:
        $p_{t+1,i}\gets\mathrm{N\text{-}SCORE}(\mathcal{H}_i,\mathcal{F}_{t+1})$\tcp*[r]{(via \cite{snyder2026beyond})}
    Update graph $G_{sim}$\tcp*[r]{(\Cref{eq:update})}
    
}
Get final ranking: $D_t \gets \mathrm{Graphical}(\{p_{t,i}\}_{i=1}^K,G,\alpha^*)$\tcp*[r]{(following \cite{bretz2009graphical})}
\Return{$D_t,\mathcal{F}_t$}\;
\end{algorithm2e}

\textbf{Efficient Active Sampling.}
The active evaluation sampling rule, denoted $S$, defines the selection of the next policy to evaluate and also encodes the stopping criterion. Importantly, because anytime-valid p-values (generated by the pairwise subroutine N-SCORE) are robust to measurable stopping times, the evaluation state of the underlying pairwise comparisons and the structure of the graph \emph{can be incorporated into policy selection.} For~\algname, we design the selection mechanism around the notion of \emph{advantage}, which is understood as a proxy for the value of sampling a policy with greater frequency than uniformly at random. Concretely, policies are `advantaged' when they are associated with comparisons possessing high $\alpha^{[i]}$, as these comparisons are in general the most efficient to reject.

\Cref{eqn:advantage_by_policy} shows a simple method to associate each policy $j$ with its most advantaged hypothesis. Here, we denote by $\mathcal{F}_t$ the observed data thus far\footnote{For technical reasons, $\mathcal{F}_t$ will also contain the realizations of an independent source of randomness to ensure measurability of the evaluation process for stochastic (hedged) selection mechanisms.} and by $G_{sim}$ the initial graph structure. Taken together, one can reconstruct the current graph state (i.e., incorporating any partial orderings) at time $t$. Each active hypothesis $\Hyp_i$ compares two robot policies, indexed as $\pi^{i}_A$ and $\pi^{i}_B$. We denote the number of evaluations of each policy with $n_{(.)}$. As N-SCORE evaluates an atomic hypothesis only at the lesser number of evaluations of each constituent policy, the `relevant' policy for a given hypothesis corresponds to the one with fewer evaluations thus far. Then, the advantage of a policy $j$ is equal to the largest active $\alpha^{[i]}$ for which policy $j$ is the \emph{binding} constraint:
\begin{equation}
    \label{eqn:advantage_by_policy}
    \hat{p}(j; \mathcal{F}_t, G_{sim}) := \max_i \alpha^{[i]}_t \cdot \bigg(\mathbbm{1}[\pi^{i}_A = j \text{ }\cap \text{ }n_{\pi^{i}_A} \leq n_{\pi^{i}_B}] + \mathbbm{1}[\pi^{i}_B = j \text{ }\cap \text{ }n_{\pi^{i}_B} \leq n_{\pi^{i}_A}]\bigg). 
\end{equation}
That is, policy \(j\) is a constituent of an unrejected hypothesis \(\Hyp_i\) and is the one with fewer samples. We then use a hedged sampling rule based on $\hat{p}$ to select the next policy:  
\begin{equation}
    \label{eqn:hedged_advantage_following}
    S(\mathcal{F}_t, G_{sim}) = 
    \begin{cases}
        j \propto \exp{(\hat{p}(j))}, \hspace{17mm}\text{if } \max_{j\in \{1, 2, ..., M\}} \hat{p}(j) > 0 \\
        \{\emptyset\}, \hspace{32.9mm} \text{if } \max_{j\in \{1, 2, ..., M\}} \hat{p}(j) = 0.
    \end{cases}
\end{equation}


\textbf{Sequential Graphical Testing.}
\label{method:proving_fwer_control}
Combining sequential and data-adaptive policy selection for evaluation and batch-only graphical procedures for multi-hypothesis correction is challenging because the latter is generally incompatible with data-adaptive, sequential procedures. Without careful correction, the statistical assurances that allow the risk `recycling' are invalidated. However, the robustness of anytime-valid p-values to arbitrary stopping times reconciles the conflict; 
their validity is preserved even when stopping decisions depend on simulated graph updates. By copying the true graph structure $G_{sim} \gets G$, one can use this to inform sampling (and p-value construction) without invalidating the graphical test upon completion. Crucially, the final graphical testing procedure is in fact run in a batch setting. Note, however, that this requires the initial graph structure $G$ to be fixed prior to collecting any evaluations, and cannot be modified after the evaluation period starts. Thus, by construction, \algname (\Cref{alg:multitest}) guarantees FWER control, as formalized in \Cref{theorem:validity}. We provide further validation of FWER control on synthetic data in~\Cref{subsec:BT_vs_Holm}.


\begin{theorem}[Validity of \algname]
    \label{theorem:validity}
     Given a risk budget $\alpha^* \in (0, 1)$, construct a valid graphical test $G$ over the set of pairwise hypotheses, and a measureable sampling rule $S(\mathcal{F}, G; N_{max}, \alpha^*)$ which encodes finite optional stopping of the evaluation. Then, the probability that the evaluation procedure in \Cref{alg:multitest} makes one or more incorrect ranking decisions is upper bounded by $\alpha^*$. 
\end{theorem}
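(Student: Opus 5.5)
The plan is to reduce the sequential procedure to a batch graphical test applied to a fixed collection of valid p-values. Then the classical FWER guarantee of \citet{bretz2009graphical} finishes the proof.

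\textbf{Step 1: validity of the stopped p-values.} Let $\tau$ be the stopping time encoded by $S(\mathcal{F}, G; N_{max}, \alpha^*)$. Because $S$ is measurable with respect to $(\mathcal{F}_t)$, and $\mathcal{F}_t$ includes the auxiliary randomization used for hedged selection, $\tau$ is a stopping time. The policy chosen at each step is also predictable. Since $\tau \le N_{max}$ in the problem's time units, it is finite.

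For each true null $\Hyp_i$, the N-SCORE p-process $(p_{i,t})$ is anytime-valid \cite{snyder2026beyond,ramdas_admissible_2022}: $\mathbb{P}(p_{i,\tau}\le u)\le u$ for every $u\in[0,1]$ and every stopping time $\tau$. The subtle point is that the data for hypothesis $i$ arrive at a data-dependent rate, since N-SCORE advances only when the lesser-sampled constituent policy is evaluated. This rate depends on other policies' rewards through $G_{sim}$.

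I would handle this by viewing each pair's N-SCORE statistic as indexed by its own paired-sample count $n_i(t)$. Because environments are i.i.d.\ and policy choice is predictable, each new paired observation is, conditionally on the past, distributed exactly as under the null. Hence the test supermartingale for $\Hyp_i$ remains a supermartingale along the time-changed filtration, and $n_i(\tau)$ is a stopping time for it. This step is the main obstacle. I would first try to verify it by invoking optional stopping for nonnegative supermartingales under a predictable time change, which is standard in the SAVI framework \cite{ramdas_game-theoretic_2023}. The net result is that the vector $(p_{i,\tau})_{i=1}^K$ consists of marginally valid p-values for true nulls, with arbitrary dependence among them.

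\textbf{Step 2: apply the batch graphical procedure.} The final decision $D_\tau=\mathrm{Graphical}(\{p_{\tau,i}\},G,\alpha^*)$ uses the graph $G$, which is fixed before any data are collected; only the copy $G_{sim}$ is updated during sampling. Therefore $G$ is independent of the data. The graphical procedure of \citet{bretz2009graphical} has a closed-testing / weighted-Bonferroni representation: for every intersection index set $J$ of true nulls, the local test rejects when $\min_{i\in J} p_i/(w_J^{[i]}\alpha^*)\le 1$ with $\sum_{i\in J} w_J^{[i]}\le 1$. This controls FWER at $\alpha^*$ using only marginal validity, via a union bound: $\mathbb{P}(\exists i\in J: p_i\le w_J^{[i]}\alpha^*)\le \sum_i w_J^{[i]}\alpha^*\le\alpha^*$. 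I would apply this with $J$ equal to the set of true nulls and the stopped p-values from Step 1.

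\textbf{Step 3: translate to rankings.} An incorrect ranking decision means rejecting some $\Hyp^0_{j,j'}$ that is true, i.e.\ concluding $\mu_j<\mu_{j'}$ when in fact $\mu_j\ge\mu_{j'}$. This event is contained in the FWER event of Step 2, so its probability is at most $\alpha^*$. I would add a remark that running both one-sided tests does not cause double counting: at most one of each pair's nulls is false, and both may be true when $\mu_j=\mu_{j'}$, but both cases are covered by the union over true nulls.
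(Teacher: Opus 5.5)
Your architecture matches the paper's proof. You use a measurable, finitely terminating rule, stopped N-SCORE p-values that are marginally valid, a graph $G$ fixed before any data (only $G_{sim}$ moves), and the batch guarantee of \citet{bretz2009graphical}. Steps 2 and 3 are correct and more explicit than the paper: the closed-testing/weighted-Bonferroni representation needs only marginal validity, so the dependence created by shared policies and adaptive sampling is harmless.

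The paper takes Step 1 directly from the measurability of $S$ and the definition of a p-process, without discussing the pairing. You are right that this needs an argument, but the one you propose fails. Fix $\Hyp_i$ comparing $A$ and $B$, with $k$-th rewards $X_k, Y_k$. Pair $k$ is completed only when the lagging policy gets its $k$-th rollout. By then the leading policy's $k$-th reward, and possibly several later ones, has already been observed. It may already enter the p-value of, e.g., $A$ versus a third policy $C$, and thereby trigger a rejection in $G_{sim}$, shift the sampling probabilities, and move $\tau$. This breaks your argument in both candidate filtrations:
\begin{enumerate}
\item In the time-changed filtration $(\mathcal{F}_{T_k})_k$, where $T_k$ is the completion time of pair $k$, the next pair can have conditional law $\delta_{X_k}\otimes P_B$ given $\mathcal{F}_{T_{k-1}}$ rather than the null law. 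This happens whenever $A$'s $k$-th rollout precedes $T_{k-1}$. So the N-SCORE process is not a supermartingale there.
\item In the pair filtration $\sigma((X_l,Y_l)_{l\le k})$, where it is a supermartingale, $n_i(\tau)$ is not a stopping time, because whether pair $k+1$ is ever completed can depend on the already-revealed $X_{k+1}$.
\end{enumerate}
So optional stopping under a time change is unavailable. This kind of look-ahead (e.g., completing a pair only when its revealed half is unfavorable to $A$) is exactly what can push the expected stopped test supermartingale above $1$.

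The repair is to drop optional stopping altogether. Each $e_t$ is a fresh i.i.d.\ draw and $j_t$ is $\mathcal{F}_t$-measurable (with the auxiliary randomization in $\mathcal{F}_t$). Hence the evaluation process has the same law as a ``stack'' model: every policy has a pre-drawn i.i.d.\ reward sequence, independent across policies, revealed in the order of its rollouts. Thus $((X_k,Y_k))_{k\ge1}$ is i.i.d.\ with a fixed law whatever $S$ does. Validity of N-SCORE at all stopping times of its own pair-time filtration is equivalent to time-uniform validity under the null, $\mathbb{P}(\exists n: p_{i,n}\le u)\le u$ \cite{ramdas_admissible_2022}. Since $\{p_{i,N}\le u\}\subseteq\{\exists n: p_{i,n}\le u\}$, this gives $\mathbb{P}(p_{i,N}\le u)\le u$ for \emph{every} random index $N$, stopping time or not. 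Taking $N=n_i(\tau)$ yields exactly the marginal validity your Step 2 uses, and the rest of your proof goes through unchanged.
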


\begin{remark}[Transitive closure] If we have resolved hypotheses \(\Hyp_i\) and \(\Hyp_{i'}\), respectively concluding that \(\pi_A < \pi_B\) and \(\pi_B < \pi_C\), we can by transitivity also reject hypothesis \(\Hyp_{ii''}\) that compares \(\pi_A\) and \(\pi_C\) to conclude that \(\pi_A < \pi_C\) without further evaluation, even if $p_k$ is not in a rejection range based on its N-SCORE subroutine. Evaluation with a very poor prior is one instance where this might arise in practice. We formalize this step in \Cref{cor:transitive_closure}, with the proof of validity deferred to Appendix \ref{appendix:proof_of_corollary}.  
\end{remark}

\begin{corollary}
    \label{cor:transitive_closure}
    Consider the \algname procedure defined in \Cref{alg:multitest}, modified such that the final ranking decisions are augmented with a transitive closure filter: 
    \begin{equation*}
        D_t \leftarrow \mathrm{TransitiveClosure}(\mathrm{Graphical}(\{p_{t,i}\}_{i=1}^K, G, \alpha^*)). 
    \end{equation*}
    Then, if the base instantiation controls FWER at level $\alpha^*$, the transitively closed (and, as a result, more powerful) instantiation also controls FWER at level $\alpha^*$. 
\end{corollary}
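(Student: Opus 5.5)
The argument is a deterministic event inclusion: every erroneous transitive conclusion is already charged to an error of the base procedure, so the FWER bound of the base instantiation transfers directly. Fix the data-generating distribution, so the true means $\mu_1,\dots,\mu_M$ are fixed. Let $E_{\mathrm{base}}$ be the event that $\mathrm{Graphical}(\{p_{t,i}\}_{i=1}^K, G, \alpha^*)$, evaluated at the (finite, measurable) stopping time of \Cref{alg:multitest}, rejects at least one true null. Let $E_{\mathrm{TC}}$ be the analogous event for the transitively closed decision set. By hypothesis, $\Pr(E_{\mathrm{base}}) \le \alpha^*$, for instance via \Cref{theorem:validity}. It therefore suffices to prove the inclusion $E_{\mathrm{TC}} \subseteq E_{\mathrm{base}}$, since then $\Pr(E_{\mathrm{TC}}) \le \Pr(E_{\mathrm{base}}) \le \alpha^*$.

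The inclusion is proved by contraposition: on the complement of $E_{\mathrm{base}}$, every decision produced by the closure is correct. First I would make the closure precise. Regard each rejection of $\Hyp^0_{j,j'}$ as a directed edge $j \to j'$, encoding the conclusion $\mu_j < \mu_{j'}$. $\mathrm{TransitiveClosure}$ then adds the edge $j \to j''$ whenever there is a directed path from $j$ to $j''$ in the base decision graph. Now suppose $E_{\mathrm{base}}$ fails. Then every base edge $j \to j'$ corresponds to a false null, which means $\mu_j < \mu_{j'}$ holds strictly.

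Next, take any closure edge $j \to j''$, witnessed by a directed path $j = k_0 \to k_1 \to \cdots \to k_m = j''$ of base edges. Chaining strict inequalities along the path gives $\mu_j < \mu_{j''}$, because $<$ on the reals is transitive. Hence the null $\Hyp^0_{j,j''}$, namely $\mu_j \ge \mu_{j''}$, is false, and rejecting it is not a Type-I error. A by-product is that the closure can never produce contradictory decisions, since a cycle of strict inequalities is impossible. This establishes $E_{\mathrm{TC}}^c \supseteq E_{\mathrm{base}}^c$.

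The argument is essentially immediate. The one step that needs care is confirming that the inclusion is pathwise, so that adaptivity and stopping play no role. The closure is a deterministic post-processing of the base decisions at the same stopping time, so no additional p-values are consulted and no additional budget is spent. Consequently the measurability and optional-stopping arguments behind \Cref{theorem:validity} need not be revisited. I would also note that the inclusion depends only on the strict inequalities that constitute the alternatives. The two one-sided tests of \Cref{rem:onesided} are handled by the same argument, because each rejection asserts a strict ordering. Finally, the claim that the closed procedure is more powerful follows because its decision set contains that of the base procedure.
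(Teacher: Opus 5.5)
Your proof is correct, but it takes a different route from the paper. You prove the pathwise inclusion $E_{\mathrm{TC}}\subseteq E_{\mathrm{base}}$ directly in the space of pairwise decisions. On the event of no base error, every base edge is a true strict inequality between the true means, and chaining along any directed path gives a true strict inequality, so the closure adds no false rejection.

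The paper instead lifts the problem to the space of the $K!$ candidate full rankings and attaches to each pairwise comparison the set of rankings consistent with it. It then shows, by a pigeonhole argument on the position of the intermediate policy, that $\Sigma_{\gamma>\alpha}\subseteq\Sigma_{\beta>\alpha}\cup\Sigma_{\gamma>\beta}$ (Greek letters indexing policies). So a transitively inferred rejection removes only candidate rankings that the two constituent rejections already removed. The event of rejecting the true full ranking is therefore unchanged.

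Both arguments rest on the same fact, transitivity of a total order. You apply it once, to the true means; the paper applies it uniformly over all candidate rankings. Your version has several advantages:
\begin{itemize}
\item It is shorter.
\item It handles closure along paths of arbitrary length in one step, rather than by iterating a single transitive step.
\item It makes explicit that optional stopping plays no role, since the closure is deterministic post-processing at the same stopping time.
\item It covers tied true means, where both one-sided nulls are true, without modification. The paper's lifted space of strict rankings implicitly assumes the true means are strictly ordered.
\end{itemize}
In exchange, the paper's lifted view reads the decisions semantically as rejections of candidate full rankings. That reading would justify any deduction implied in that space, not only transitivity.
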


\section{Experiments}
\label{sec:evaluation}


We investigate the following research questions:
\begin{enumerate}[label=\textbf{R\arabic*}., leftmargin=*, align=left]
    \item \textbf{Statistical Reliability.} Does the~\algname~family maintain statistical reliability (i.e., FWER control), even when specified with incorrect simulation priors? Moreover, does this statistical reliability guard against policy ranking failure modes observed in Bradley--Terry baselines? 
    \item \textbf{Efficiency.} Does~\algname~increase statistical power by improving on both the number of hypotheses rejected and total sample complexity compared to other FWER-controlling methods? 
    \item \textbf{Impact of Priors.} Does the sample complexity of~\algname improve with better priors?
    \item \textbf{Scalability.} How does \algname scale as the number of policies increase?
\end{enumerate}

\textbf{Experiment Setup.} We consider three sets of hardware policy evaluations: i) RoboArena evaluations comprising crowd-sourced evaluations for generalist policies in diverse environments and tasks ~\cite{atreya2025roboarena} with pairwise preferences, ii) Large Behavior Models (\textbf{LBM}) evaluations~\cite{barreiros2026careful}, and iii) task-specific hardware policy evaluations on DROID setup (termed as \textbf{DROID-task}). In total, we consider over 7000 policy rollouts across 20+ policies. We use both world model rollouts and heldout evaluations as priors for \algname, $\algname_{\infty}$, and weighted Bonferroni methods. Additionally, we use synthetic data to study scalability of \algname, along with additional analysis on FWER control in~\Cref{subsec:BT_vs_Holm}. Unless otherwise noted, we use the global confidence level $\alpha^* = 0.1$. 

\begin{table}[t!]
\centering
\resizebox{\columnwidth}{!}{%
\begin{tabular}{c|l|c|c|c}
\hline 
& \shortstack[c]{\textbf{Methods}\\~}
& \shortstack[c]{RoboArena4 \\ \emph{{Sample complexity: 1200}}}
& \shortstack[c]{RoboArena7\\ \emph{{Sample complexity: 5106}}}
& \shortstack[c]{RoboArena7 \\ \emph{{Sample complexity: 2450}}}
\\ 
\hline 
\multirow{3}{*}{\raisebox{-5pt}{\rotatebox{90}{\textbf{Ours}}}}
& \algname
& {\(\A > \F > \G\)}, \partialtxt{\(\{\A\E\}\)}, \partialtxt{\(\{\E\F\}\)}
& {\(\partialtxt{\{\A \B \C \D \E\}} > \F > \G\)}
& \(\partialtxt{\{\A \B \C \}} > \F > \G, \partialtxt{\{\A\B\C\D\E\}}, \partialtxt{\{\D\E\F\}}\) \\
& $\algname_{0}$
& {\(\A > \F > \G\)}, \partialtxt{\(\{\A\E\}\)}, \partialtxt{\(\{\E\F\}\)}
& {\(\partialtxt{\{\A \B \C \D \E\}} > \F > \G\)}
& \(\partialtxt{\{\A \B \C \}} > \F > \G, \partialtxt{\{\A\B\C\D\E\}}, \partialtxt{\{\D\E\F\}}\) \\
& $\algname_{\infty}$
& \(\A > \E > \G\), \(\partialtxt{\{\A\F\}}\), \(\partialtxt{\{\E\F\}}\)
& \(\A > \E > \F > \G\), \(\partialtxt{\{\A \B \C \D\}}\), \(\partialtxt{\{\B\C \D \E\}}\), \(\partialtxt{\{\B \C \D \F\}}\) 
&  \(\A > \E > \F > \G\), \(\partialtxt{\{\A \B \C \D\}}\), \(\partialtxt{\{\B \C \D \E\}}\), \(\partialtxt{\{\B \C \D \F\}}\) \\
\hline
\multirow{6}{*}{\raisebox{-25pt}{\rotatebox{90}{\textbf{Baselines}}}}
& Bonferroni
& {\(\A > \F > \G\)}, \partialtxt{\(\{\A\E\}\)}, \partialtxt{\(\{\E\F\}\)}
& {\(\partialtxt{\{\A \B \C \D \E\}} > \F > \G\)}
& \(\partialtxt{\{\A \B \C \}} > \F > \G, \partialtxt{\{\A\B\C\D\E\}}, \partialtxt{\{\D\E\F\}}\) \\
& Weighted Bonferroni
& {\(\A > \F > \G\)}, \partialtxt{\(\{\A\E\}\)}, \partialtxt{\(\{\E\F\}\)}
& {\(\partialtxt{\{\A \B \C \D \E\}} > \F > \G\)}
& \(\partialtxt{\{\A \B \C \}} > \F > \G, \partialtxt{\{\A\B\C\D\E\}}, \partialtxt{\{\D\E\F\}}\) \\
& BT (Classical)
& \(\textcolor{red}{\E > \A} > \F > \G\)
& \(\A >  \textcolor{orange}{\E > \D > \B > \C} > \F > \G\)
& \(\A >  \textcolor{orange}{\D > \E > \C > \B} > \F > \G\) \\
&  BT-Davidson (leaderboard)
& \(\textcolor{red}{\E > \A} > \F > \G\)
& \(\A > \textcolor{orange}{\E > \D > \B > \C} > \F > \G\) 
& \(\A > \textcolor{orange}{ \D > \E > \C > \B} > \F > \G\) \\
&  BT-EM
& {\(\A > \E > \F > \G\)}
& {\(\textcolor{orange}{\A > \D > \B > \C} > \E > \F > \G\)}
& \(\textcolor{orange}{\D > \A > \C > \B} > {\E > \F} > \G\) \\
&  BT-EM w/ partial progress 
& \(\A > \E > \F > \G\)
& {\(\A > \textcolor{orange}{\D > \B > \C} > \textcolor{red}{\F > \E} > \G\)}
& $\A > \textcolor{orange}{ \C > \E > \D > \B} > \F > \G$ \\
\hline
\end{tabular}%
}
\caption{Policy rankings for RoboArena~\cite{atreya2025roboarena} using world model evaluations as a prior ($r=0.05$). In descending order of empirical mean progress, the policies are: \A: {\(\pi_0\)-FAST} (\(\hat{\mu} = 0.459\)), \B: {PG-FAST} (\(\hat{\mu} = 0.430\)), \C: {PG-FAST-Spec} (\(\hat{\mu} = 0.428\)), \D: {PG-FSQ} (\(\hat{\mu} = 0.419\)), \E: {PG-Diff} (\(\hat{\mu} = 0.403\)), \F: \(\pi_0\) (\(\hat{\mu} = 0.350\)), and \G: {PG-Bin} (\(\hat{\mu} = 0.051\)). Policies grouped in \partialtxt{gold} are not statistically distinguishable by the method. \textcolor{red}{Red} implies incorrect rankings, and \textcolor{orange}{Orange} indicates temporally inconsistent rankings. }
\label{tab:roboarena_rankings}
\end{table}

\noindent \textbf{Baselines.} 
Bradley--Terry (BT) is a probabilistic modeling approach that scores policies based on pairwise comparison of preferences using maximum likelihood estimation~\cite{bradley1952rank}, and policies are then ranked on the estimated scores. RoboArena presents three variants which we compare against: i) BT-Davidson which accommodates ties with an additional scalar parameter and is the current leaderboard implementation, ii) BT-EM which extends the classical model by adding task effect parameters, and iii) BT-EM w/ partial progress which  
uses partial task progress scores in addition to pairwise preferences. None of these methods guarantee FWER control. Additionally, we compare against sequentialized uniform and weighted Bonferroni corrections that control FWER by distributing the total risk budget across individual hypothesis tests and applying a union bound. In the weighted Bonferroni baseline, the initial risk is allocated in proportion to the difficulty of comparison informed by the prior; policies with closer prior performances are given a larger fraction of the budget.

\noindent  \textbf{Metrics.} An optimal multi-policy ranking procedure will return a correct and complete ranking in as few trials as possible. First, we check whether a ranking procedure maintains \textit{FWER Control}, which requires that rankings to be correct at any stopping time with high probability. Failed FWER control manifests in two key ways: i) \textit{miscalibration}, where the procedure returns a ranking decision before sufficient evidence has accumulated, and ii) \textit{inconsistency}, where the ranking procedure changes the ordering of at least two policies as more data is collected. Second, for methods that control FWER, we measure statistical power with the number of pairwise \emph{hypotheses rejected} and \emph{sample complexity}. 

\begin{figure}[h]
    \centering
    \begin{subfigure}[b]{0.32\linewidth}
        \centering
        \includegraphics[width=\textwidth]{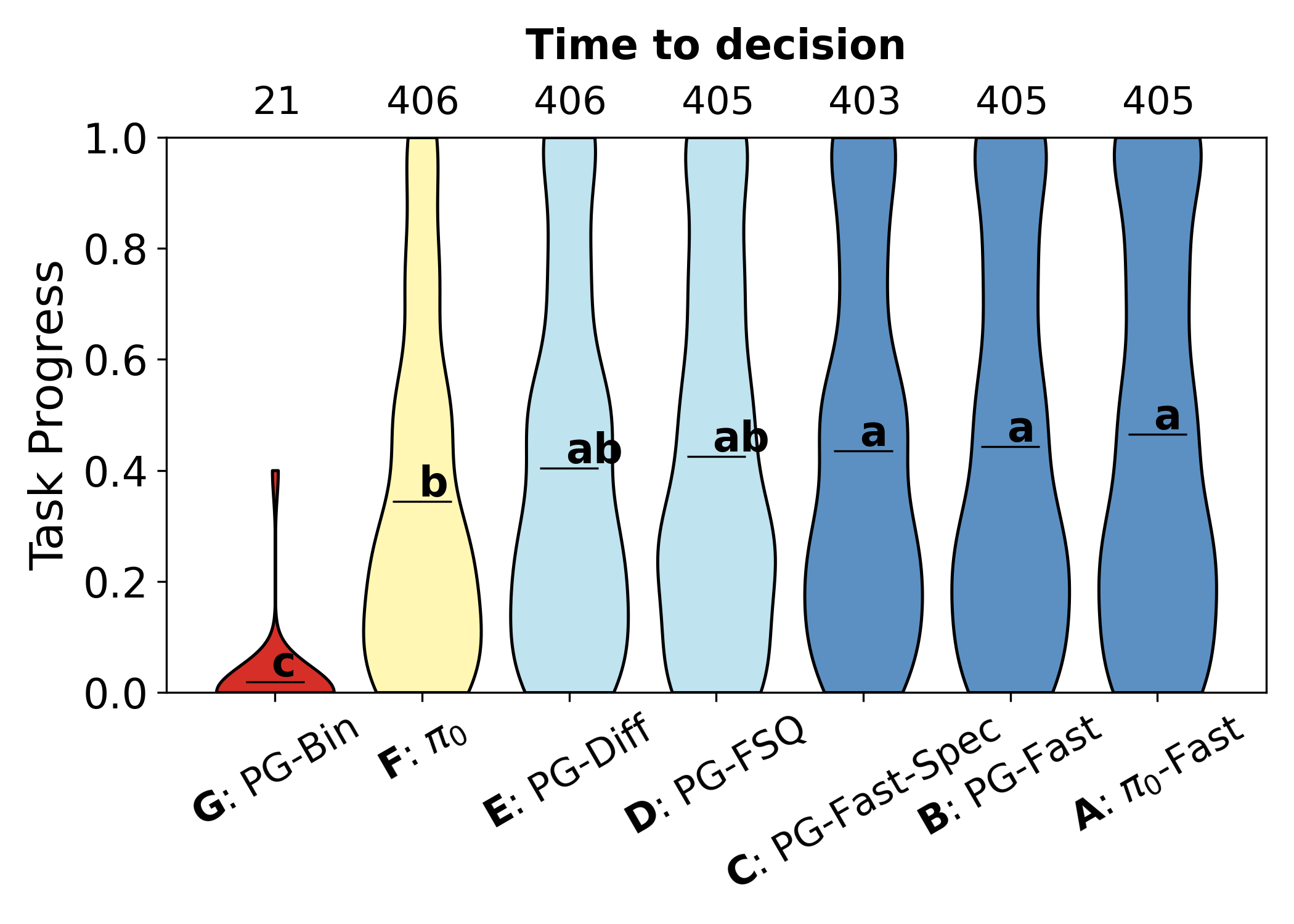}
        \caption{2450 total evaluations.}
        \label{fig:active_half_a}
    \end{subfigure}
    \hfill
    \begin{subfigure}[b]{0.32\linewidth}
        \centering
        \includegraphics[width=\textwidth]{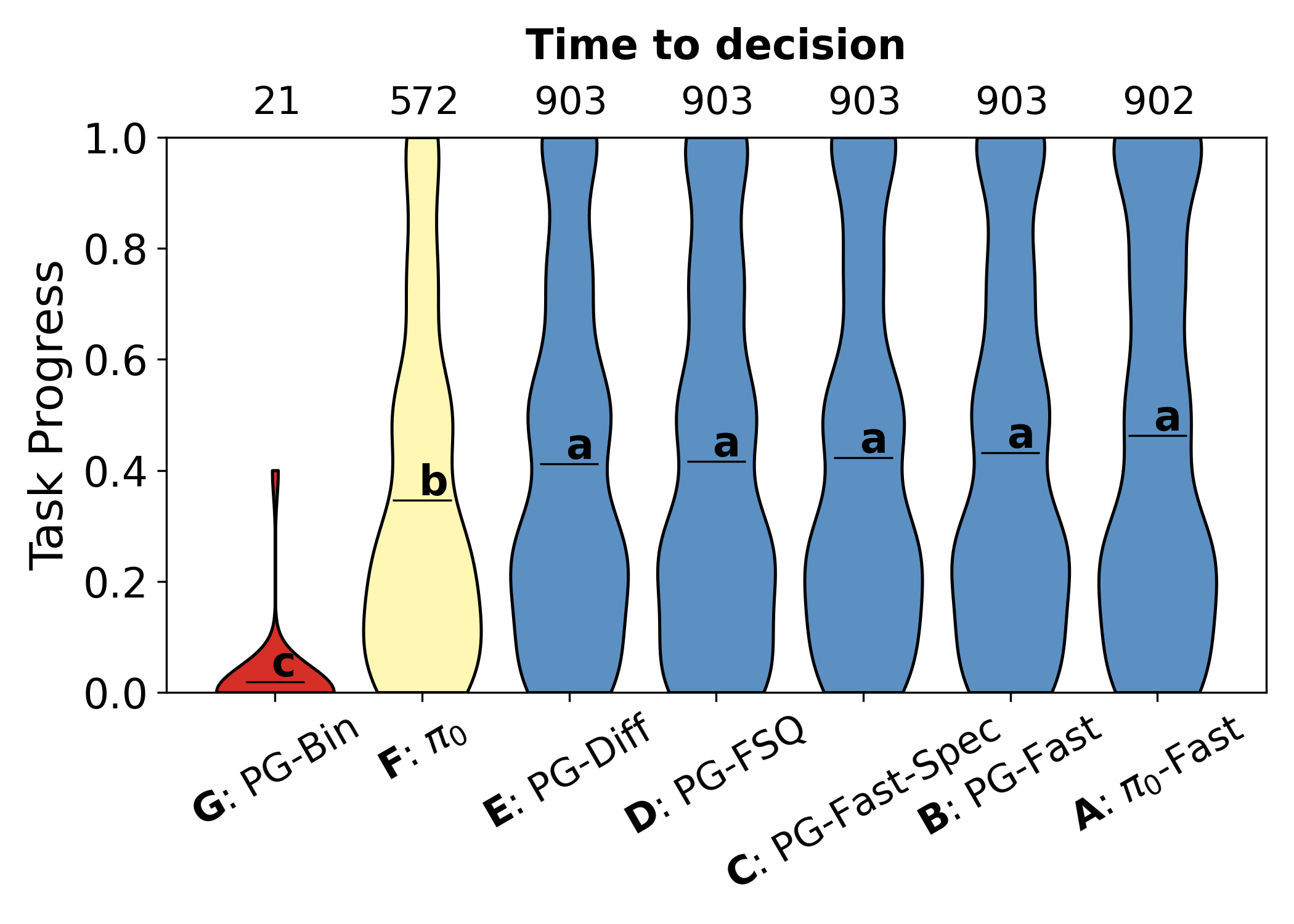}
        \caption{5106 total evaluations.}
        \label{fig:active_half_b}
    \end{subfigure}
    \hfill
    \begin{subfigure}[b]{0.32\linewidth}
        \centering
        \includegraphics[width=\textwidth]{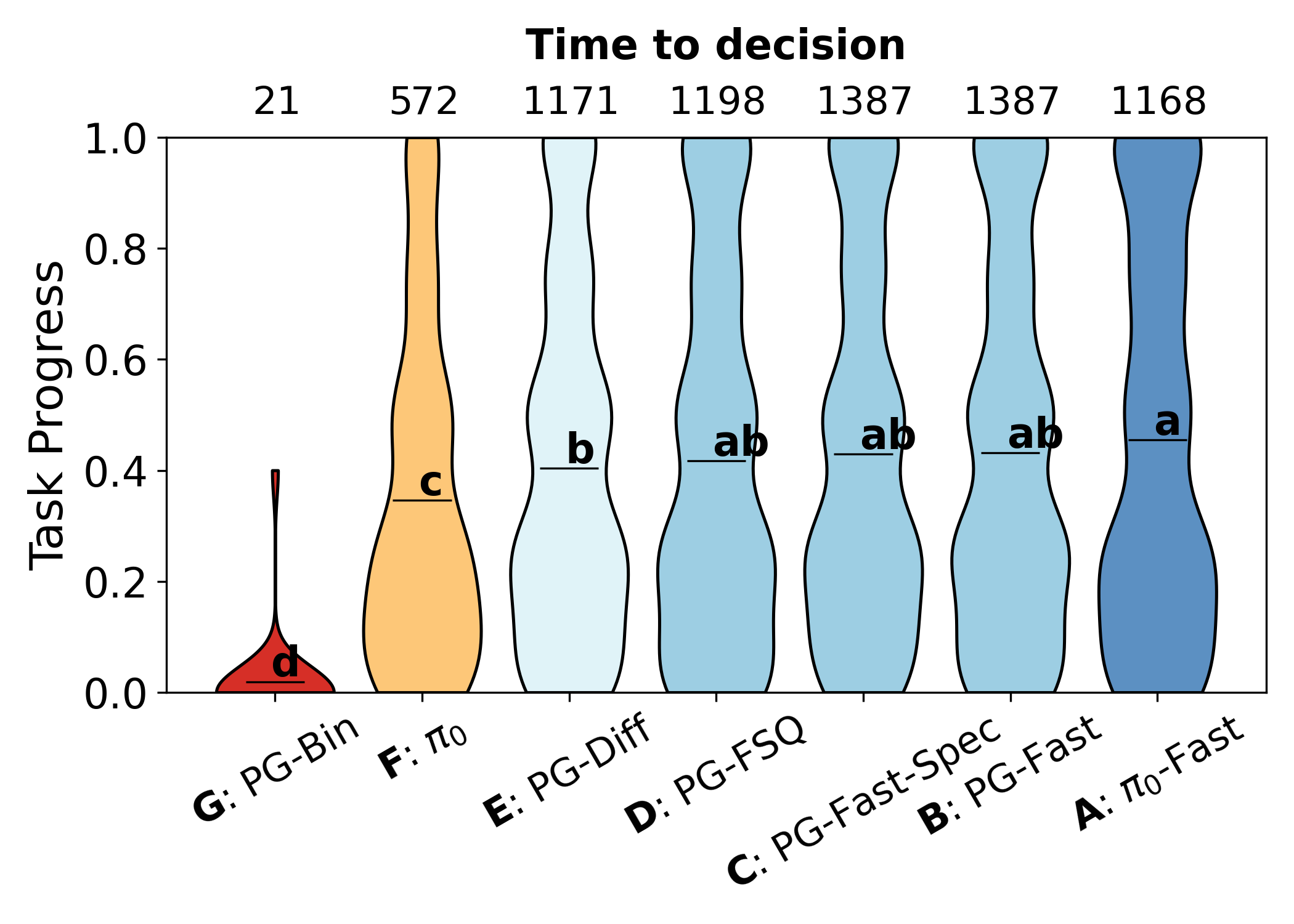}
        \caption{All available evaluations.}
        \label{fig:active_half_c}
    \end{subfigure}
    \caption{\textbf{Consistency}: \algname returns self-consistent rankings as the total sample complexity increases. Time to decision indicates the number of evaluations required for each policy. }
    \label{fig:active_half}
\end{figure}
\textbf{Reliability of Policy Rankings (R1).} 
\Cref{tab:roboarena_rankings} lists Bradley--Terry policy rankings on RoboArena, along with rankings produced by FWER-controlling methods limited to the same number of total trials as BT methods\footnote{Each RoboArena scene can have multiple policy evaluations, but only has one set of pairwise   preferences}. Unlike BT methods that always produce a complete ranking, FWER-controlling methods return partial rankings if one or more hypotheses cannot be rejected (see~\Cref{tab:roboarena_rankings}). This is often due to the number of evaluations being insufficient in relation to the difference between the true means of the policies being considered. \algname (and its ablations) are tunably \emph{robust} and temporally \emph{consistent}: they will not mislead the evaluator when the evaluation dataset is small or when there are many policies present, and will not switch ordering as the amount of data increases.~\Cref{fig:active_half} visualizes \algname's consistent policy rankings for RoboArena-7 with world model evaluations as prior using violin plots to depict the empirical spread of raw evaluation data and compact letters~\cite{piepho2004algorithm} to denote ranking assignments. If two policies share a ranking letter assignment, then given the evaluations, they are not statistically distinguishable with \(90\%\) confidence. 

In contrast, BT methods do not provide FWER control. In RoboArena4 (\Cref{tab:roboarena_rankings}), Classical and Davidson BT methods report incorrect rankings with high probability (\emph{miscalibration}), scoring policy \(\E\) higher than policy \(\A\) (\emph{miscalibration}). Given that \algname, using all available data, correctly ranks the policies with \(90\%\) probability (see~\Cref{tab:roboarena_rankings_full} and~\Cref{fig:active_half_c}), we can state that there is a \(90\%\) chance that BT ranking of \(\E > \A\) is incorrect. Secondly, BT rankings need not be self-consistent as more evaluations are collected (\emph{inconsistency}). This can be seen in~\Cref{tab:roboarena_rankings}, where all BT variants are self-inconsistent in RoboArena7. A second instance of inconsistency is when the number of policies are increased: BT (EM w/ partial progress) incorrectly concludes \(\F > \E\) in Roboarena7 (at 5106 evaluations) despite using the same policy evaluation data for \(\F\) and \(\E\) as in RoboArena4. 

Furthermore, \algname rankings in~\Cref{tab:roboarena_rankings} use world model evaluations as a prior, which has low correlation with real-world empirical policy rankings for policies \(\A-\F\) ($r=0.05$; ~\Cref{fig:roboarena_sim}). Despite these imperfect priors, \algname policy ranking is still guaranteed to be correct with high probability, and the sample complexity remains comparable to the no-prior ablation \(\algname_0\). 
 
\begin{wrapfigure}{r}{0.32\textwidth}
    \centering
    \includegraphics[width=0.85\linewidth]{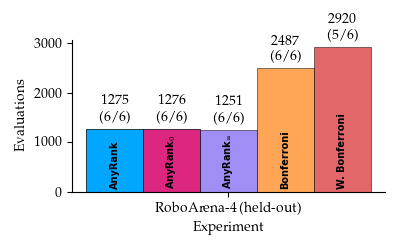}
    \caption{
       RoboArena4 with heldout evaluations prior.
    }
    \label{fig:roboarena_samples}
    \vspace{-10mm}
\end{wrapfigure}
\textbf{Policy Evaluation Efficiency (R2).}
In \Cref{fig:roboarena_main,fig:tri_ood}, we see that \algname and $\algname_0$ reject the largest number of hypotheses with in the fewest number of samples. On RoboArena4 (with heldout evaluations as a prior), \textbf{the \(\algname\) family achieves around a \(50\%\) reduction in hardware trials to fully decide compared to Bonferroni} (\Cref{fig:roboarena_samples}). Similar efficiency gains of \algname over Bonferroni hold on RoboArena and TRI-LBM (\Cref{fig:roboarena_main,fig:tri_ood}). 
Weighted Bonferroni rejecting the fewest hypotheses despite having the highest sample complexity. This efficiency of \algname over Bonferroni is enabled by the risk re-allocation of the underlying graphical testing methods.

\begin{figure}[h]
    \centering
    \begin{subfigure}[t]{0.3\linewidth}
        \centering
        \includegraphics[width=\linewidth]{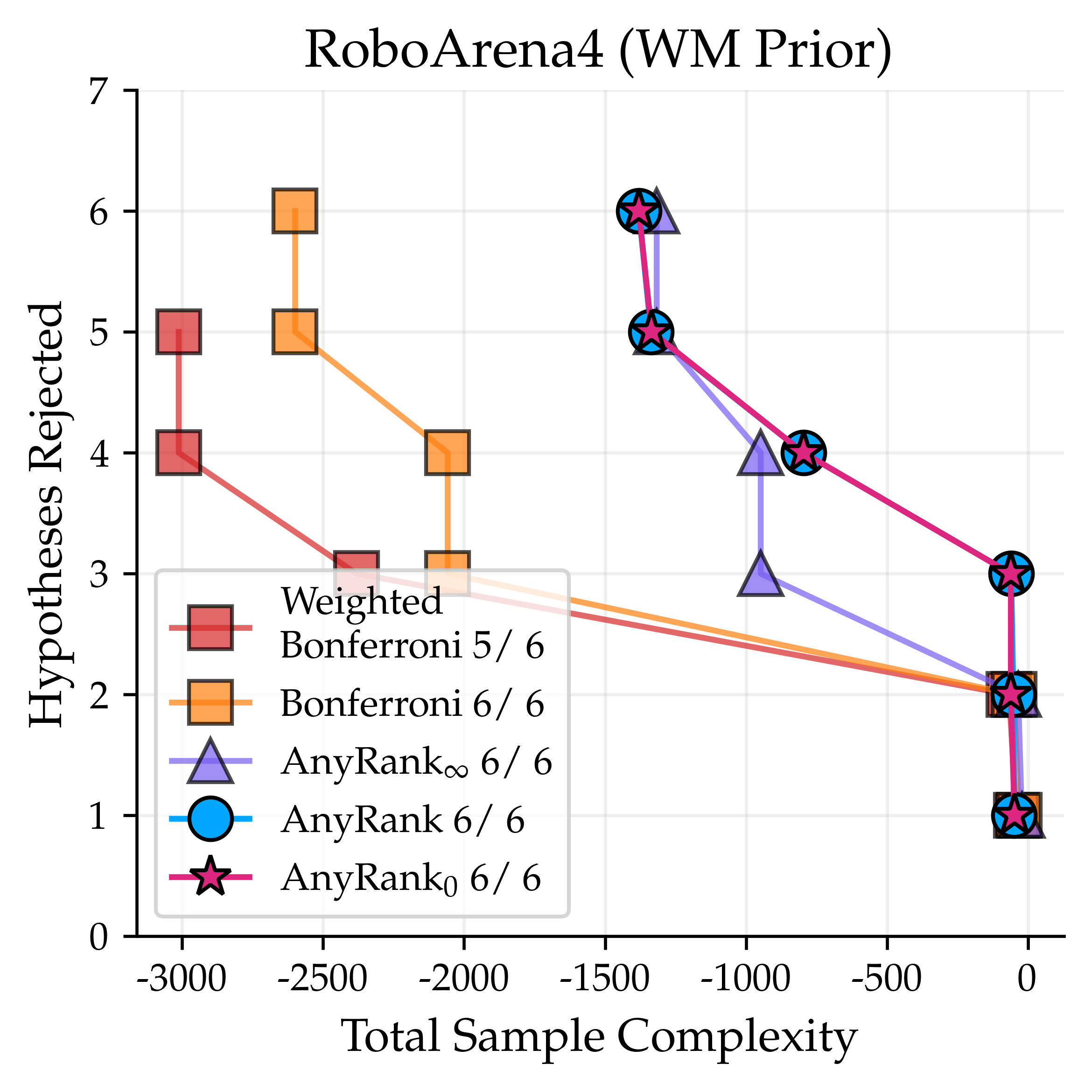}
    \end{subfigure}
    \hfill
    \begin{subfigure}[t]{0.3\linewidth}
        \centering
        \includegraphics[width=\linewidth]{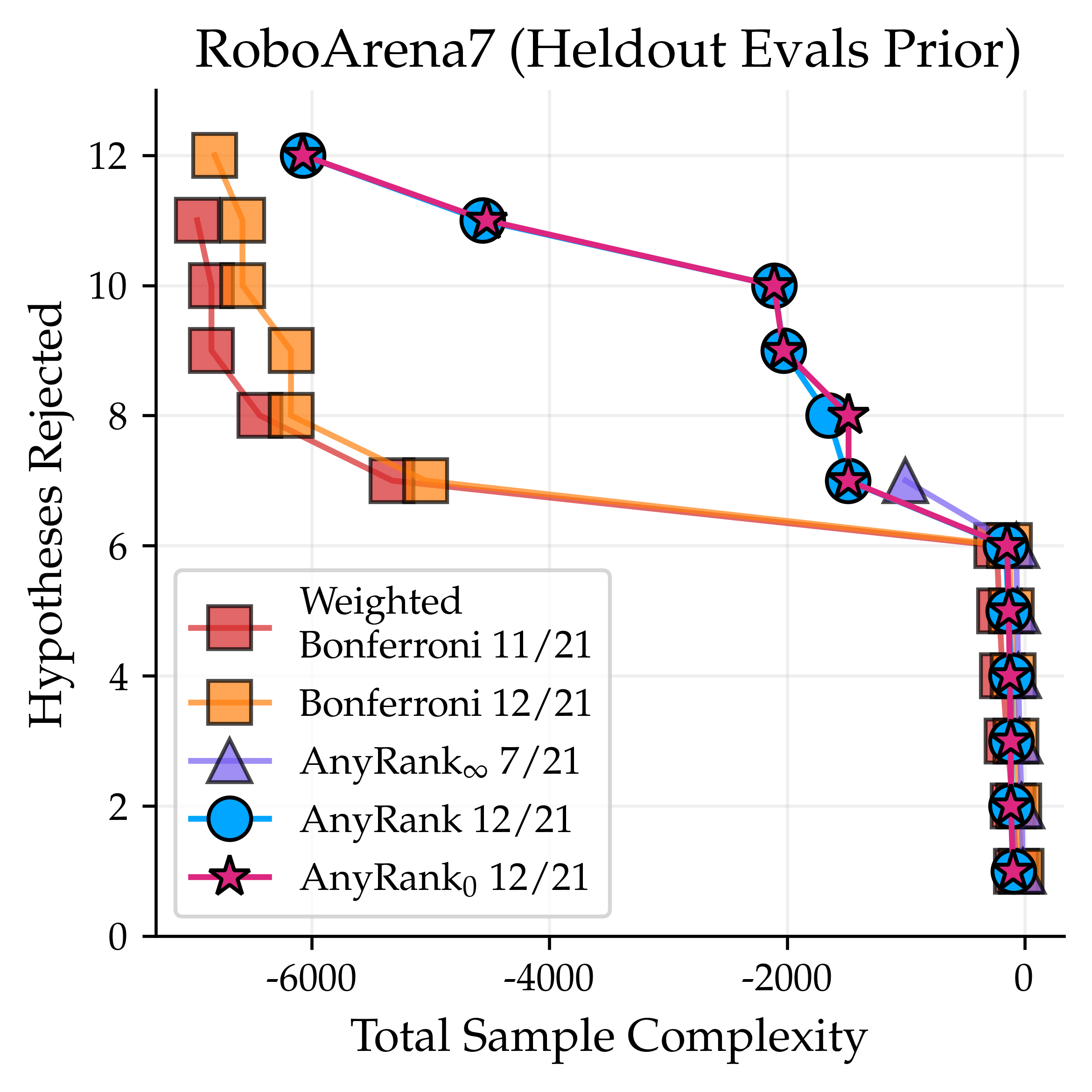}
    \end{subfigure}
    \hfill
    \begin{subfigure}[t]{0.3\linewidth}
        \centering
        \includegraphics[width=\linewidth]{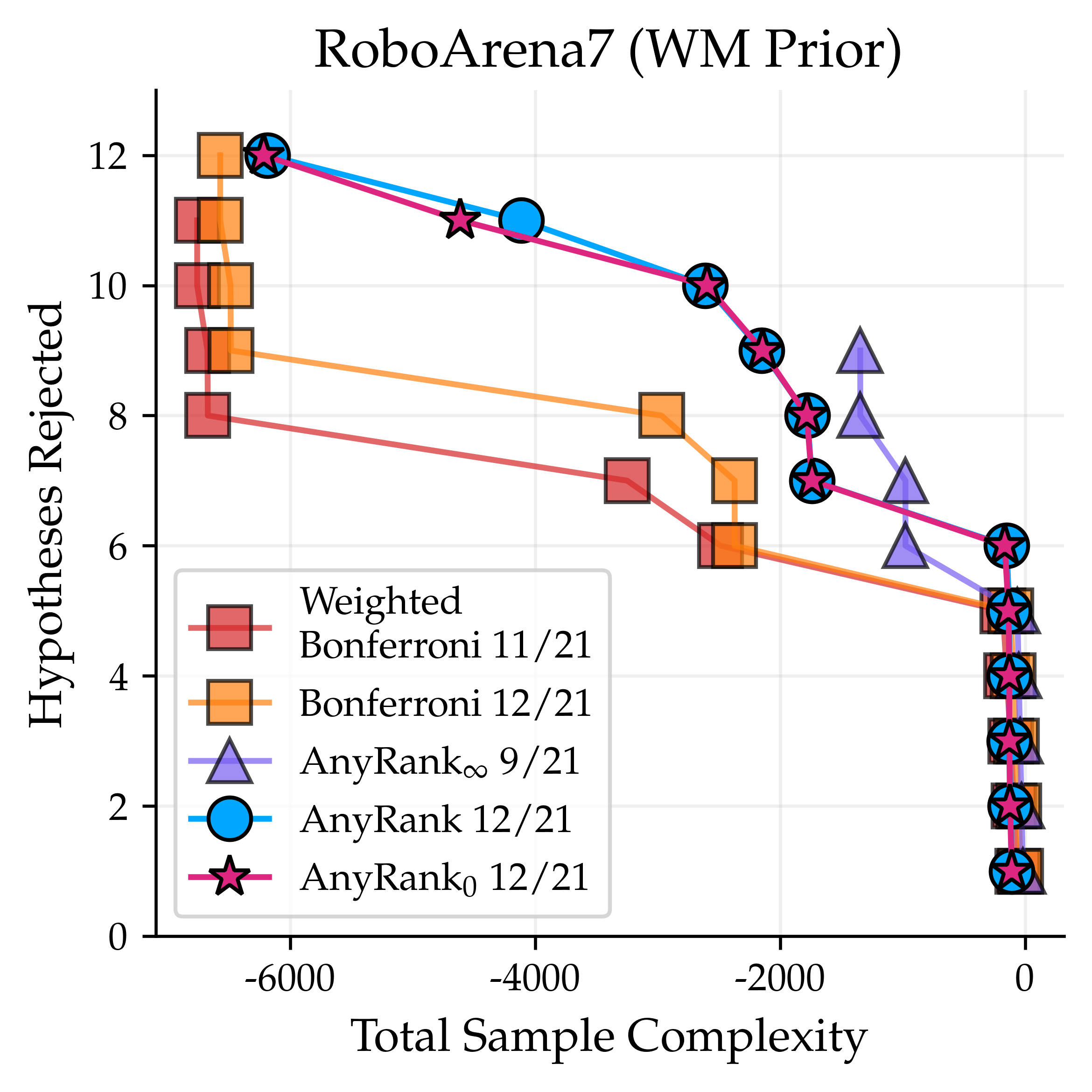}
    \end{subfigure}
    \caption{\textbf{Pareto Curves for RoboArena.} Efficacy of ranking methods by the number of atomic hypotheses rejected as well as the (negated) total sample complexity. The upper right of the plot corresponds to the most effective method: maximizing hypotheses rejected with low sample complexity indicates greater progress towards achieving a full policy ranking in the fewest hardware evaluations.}
    \label{fig:roboarena_main}
\end{figure}

\begin{figure}[h]
    \centering
    \begin{subfigure}[t]{0.3\linewidth}
        \centering
        \includegraphics[width=\linewidth]{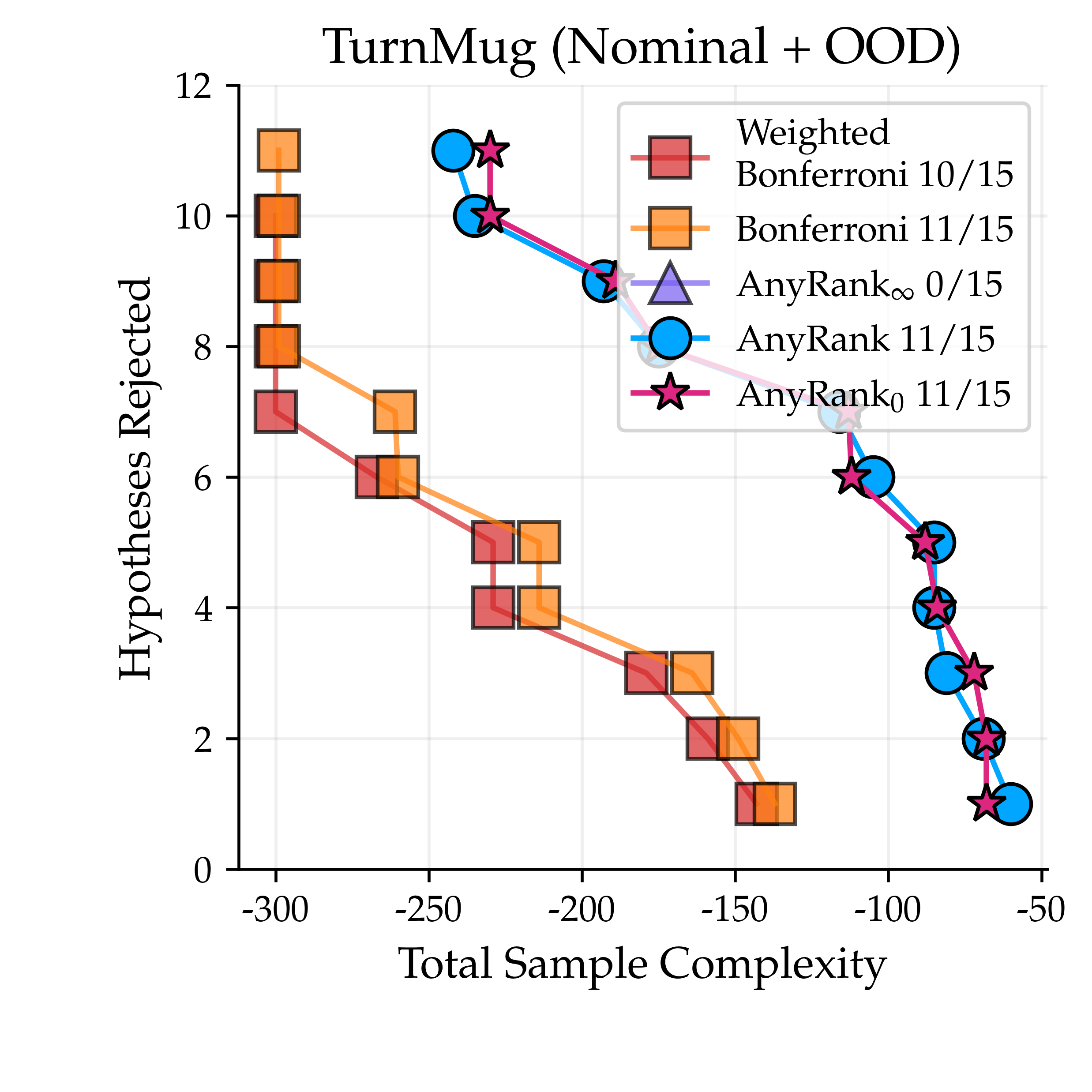}
        \caption{TurnMug (Nominal and OOD)}
    \end{subfigure}
    \hfill
    \begin{subfigure}[t]{0.3\linewidth}
        \centering
        \includegraphics[width=\linewidth]{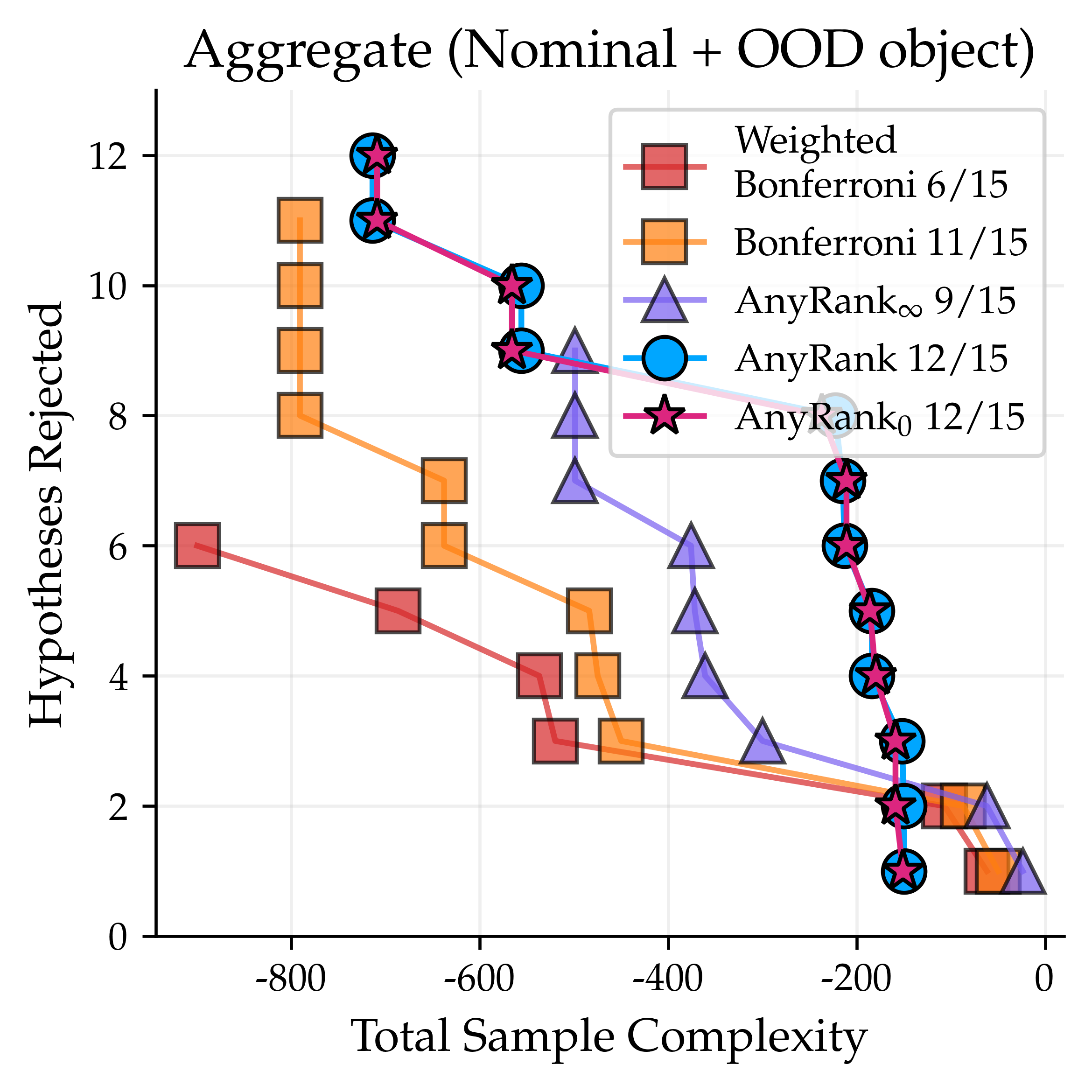}
        \caption{Aggregate: OOD Object}
    \end{subfigure}
    \hfill
    \begin{subfigure}[t]{0.3\linewidth}
        \centering
        \includegraphics[width=\linewidth]{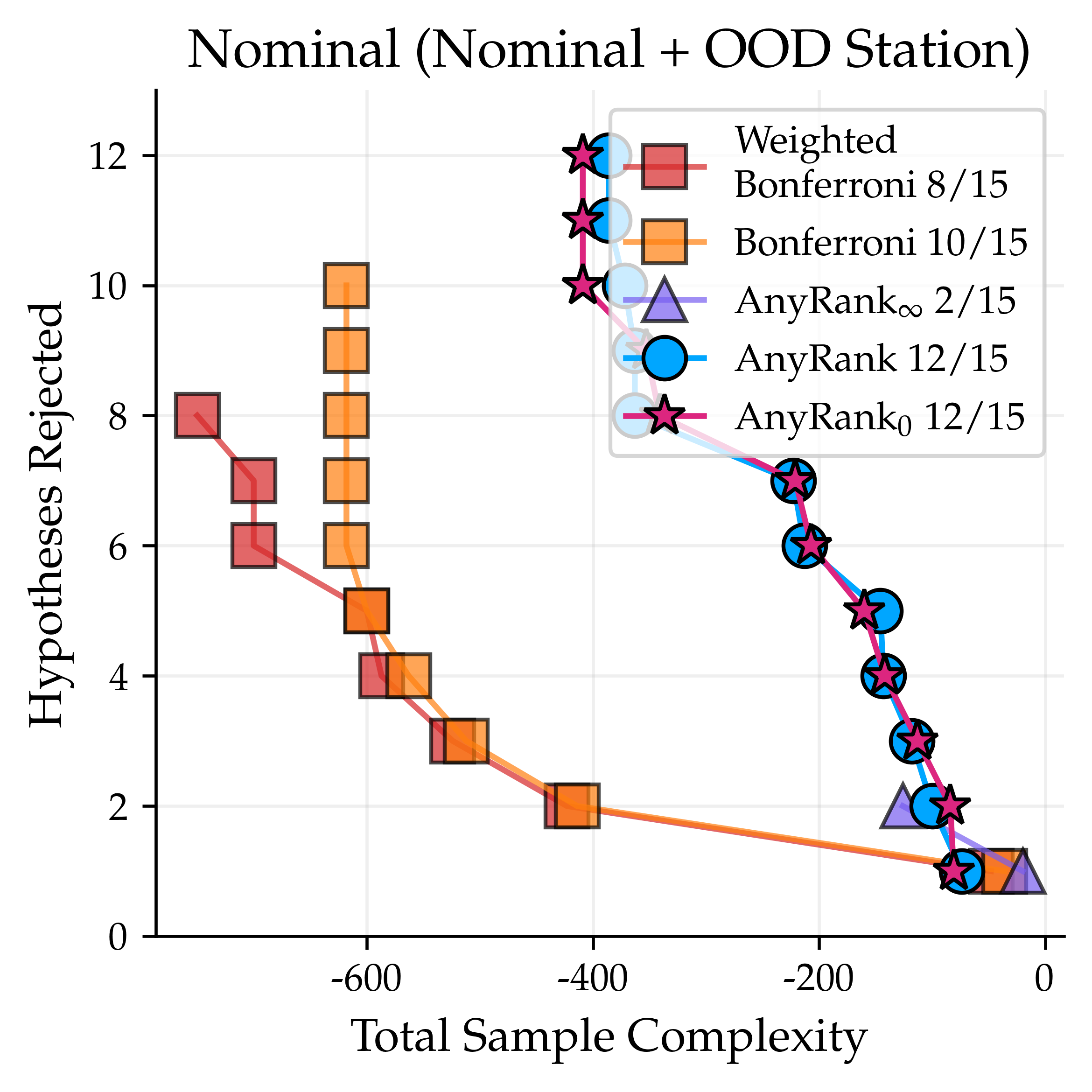}
        \caption{Aggregate: OOD Station}
    \end{subfigure}
    \caption{\textbf{Pareto Curves for TRI-LBM.} \algname is the most efficient, rejecting the largest number of hypotheses in the fewest number of samples, comparable to the no prior variant \(\algname_0\).}
    \label{fig:tri_ood}
\end{figure}

\begin{figure}[h]
    \centering
    \includegraphics[width=\linewidth]{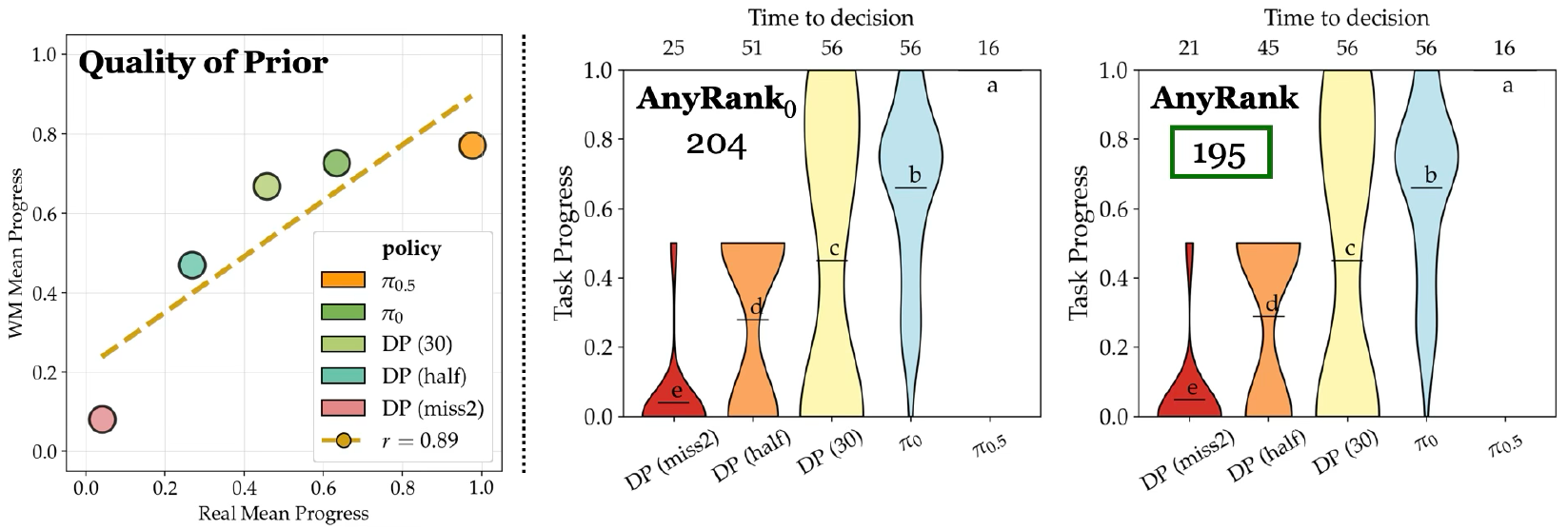}
    \caption{
    \textbf{R4. \algname improves with good priors.} 
     \algname (\(\gamma=10\)) achieves full policy ranking with fewer evaluations compared to~$\algname_{0}$.
    }
    \label{fig:good_prior}
\end{figure}
When all hypotheses are decidable (e.g., RoboArena4), \(\algname_{\infty}\) is slightly more efficient in total sample complexity after rejecting all hypotheses as it sequentially allocates the full risk budget to one hypothesis at a time in order of hypothesis difficulty as specified by the prior.  In general, however, \(\algname_{\infty}\) can get stuck on difficult-to-reject hypotheses such as in RoboArena7 (\Cref{fig:roboarena_main}) and TRI-LBM (\Cref{fig:tri_ood}), resulting in fewer hypotheses rejected. \algname provides a suitable trade-off: it always rejects a larger fraction of hypotheses by adaptively reallocating confidence budget during evaluation, shifting the budget toward harder hypotheses based on real-world evaluation data. 

\textbf{AnyRank Improves with Good Priors (R3).}
To investigate whether stronger priors improve sample efficiency of \algname over $\algname_0$, we consider a ``Fold towel" task on the DROID platform, and fine-tune a video world model using play data~\cite{yin2026playworld} to get strong policy ranking signals ($r=0.89$). In~\Cref{fig:good_prior}, we see that \algname requires fewer samples than $\algname_{0}$ (204 vs. 195). Additionally, $\algname_{\infty}$ can reject all hypotheses in \(188\) trials, a lower bound on the minimum required hardware evaluations. These results show that while strong priors can are helpful, the efficiency gains over no prior graphical versions need not be substantial. 

\begin{figure}
    \centering
    \includegraphics[width=\linewidth]{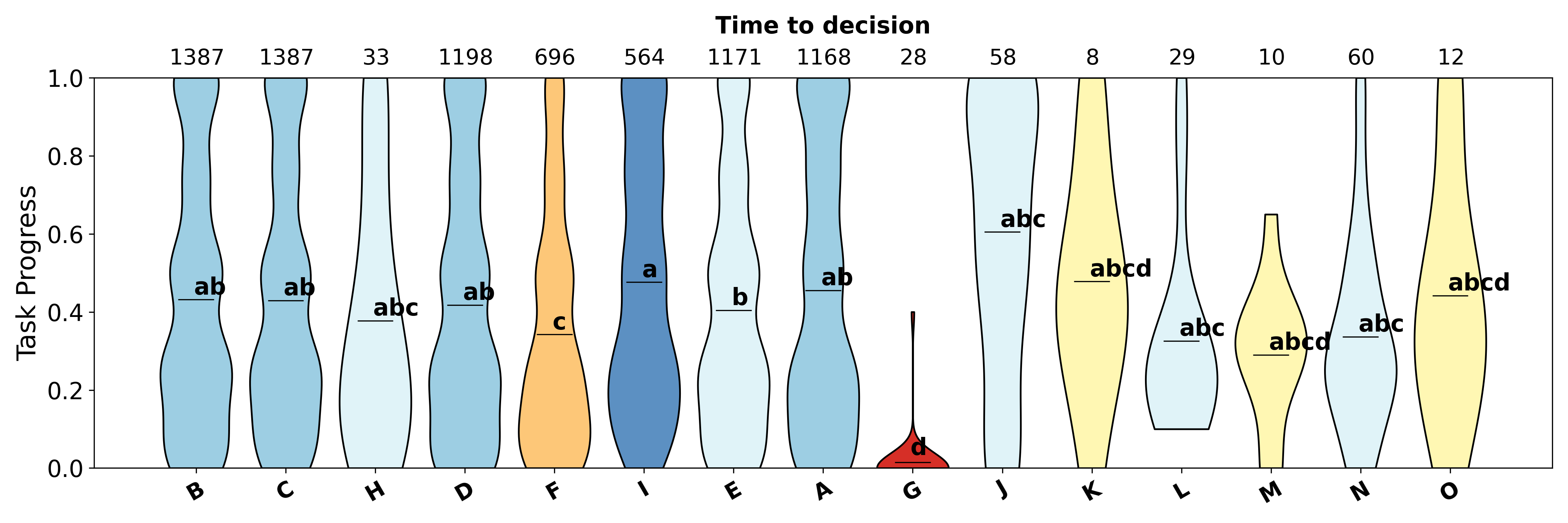}
    \caption{\algname ranking 15 RoboArena policies using all available data. Bradley--Terry rankings for these policies are detailed in~\Cref{tab:roboarena_bt_rankings}.}
    \label{fig:scalability}
\end{figure}

\textbf{Scalability (R4).}
 Risk re-allocation of the sequential graphical framework enables~\algname~ to attenuate the \(\mathcal{O}(M^2)\) loss of power associated with Bonferroni methods. However, as the number of policies increases, more policy pairs become difficult to distinguish, increasing the inherent difficulty of ranking for \emph{any statistically valid method}. As shown in the experiments on synthetic data in \Cref{Sample_Complexity}, scaling the number of policies increases the total sample complexity but shows uniform advantage of \(\algname\) over standard Bonferroni.

\begin{table}
    \centering
        \resizebox{\linewidth}{!}{%
        \begin{tabular}{cccccc}
        \toprule
        $N_{\text{max}} \downarrow$ &
        \# of policies $M \rightarrow$ & 5 & 10 & 15 & 20 \\
        \midrule
        500 & \algname & \textbf{1102} (9/10) & \textbf{2480} (44/45) & \textbf{6705} (98/105) & \textbf{9075} (182/190) \\
        500 & Bonferroni & 1170 (9/10) & 2855 (43/45) & 6742 (93/105) & 9220 (175/190) \\
        \midrule
        5,000 & \algname & \textbf{1554} (10/10) & \textbf{3259} (45/45) & \textbf{38k} (102/105) & \textbf{57k} (185/190) \\
        5,000 & Bonferroni & 1683 (10/10) & 5191 (45/45) & $>42$k (102/105) & $>63$k (185/190) \\
        \bottomrule
        \end{tabular}
        }
    \caption{\small Sample complexity and fraction of hypotheses rejected on synthetic data at significance \(\alpha^*=0.1\) and \(N_{\max}\in\{500,5000\}\) per policy.}
    \label{Sample_Complexity}
\end{table}
Finally, to illustrate scalability on real data, we rank 15 policies from RoboArena, including some which were subsequently retracted from the leaderboard due to insufficient evaluations. Again, we observe that Bradley--Terry rankings are susceptible to miscalibration and inconsistency, especially at small sample sizes. For example, in~\Cref{tab:roboarena_bt_rankings}, Bradley-Terry (Davidson) outputs a full ranking, but according to~\Cref{fig:scalability}, a lot of the policies remain statistically indistinguishable due to small sample sizes relative to differences in mean performance. While Bradley-Terry ranks some of these few-trial policies highly (e.g., policies \textbf{H} and \textbf{K} in~\Cref{tab:roboarena_bt_rankings}), \algname is robust to the limited evidence, returning a partial ranking that reflects the statistical uncertainty.

\section{Conclusion}
\label{sec:conclusion}
We propose \algname, a policy ranking framework that produces anytime-valid rankings, provides statistical assurances on their correctness, and achieves sample efficiency by adaptively allocating trials in a data-dependent manner and optionally leveraging prior information. This is made possible through novel theoretical contributions that fuse anytime-valid inference with multi-hypothesis testing. Experiments across real-world benchmark datasets demonstrate significant improvements in statistical reliability and sample efficiency compared to state-of-the-art approaches. 

{\bf Limitations and future work.} Our framework assumes \(i.i.d\) evaluation data for providing statistical assurances. Evaluation best practices~\cite{kress2024robot,barreiros2026careful} are particularly effective in this regard, some of which are incorporated into~\algname, such as adaptive sampling that interleaves policy evaluations. In addition, the number of hardware evaluations required for statistically reliable policy comparison can be large~\cite{snyder2025your}. One approach to addressing this limitation is the use of proxy evaluations as a substitute for real evaluations~\cite{badithela_reliable_2025}, which we leave to future work.


\section*{Acknowledgments}
We thank Lihua Lei for insightful discussions on graphical testing and safe anytime valid inference. A. Badithela is supported by the Presidential Postdoctoral Fellowship. D. Snyder acknowledges support from the Toyota Research Institute (TRI) and Penn AI Fellowship. The authors were also partially supported by the NSF Career Award \#2044149, the NSF SLES Award \#2331880, and the Sloan Fellowship. TRI provided funds to assist the authors with their research; this work solely reflects the opinions and conclusions of its authors, and not TRI nor any other Toyota entity.


\bibliographystyle{unsrtnat}
\bibliography{references}

\clearpage

\beginappendix{
    \appendix 
\section*{\huge Supplementary Material}

\section{Technical Implementation}

\subsection{Strategies for Graph Propagation and Risk Reallocation}
\label{sec:realloc}
We motivate the intuition for prior-dependent graph initialization using as illustrative comparison two special cases of multitest procedures: the Holm procedure \cite{holm_1979} and the fixed-sequence procedure \cite{bretz2009graphical}. The Holm structure corresponds to uniform initial risk allocation and reallocation weights. It can be shown~\cite{holm_1979} that for $K$ hypotheses, the associated cutoffs at the time of decision for each rejected atomic hypothesis will be (in ordered form) $\{\alpha^{[(n)]}\} \equiv \{\alpha^* / n: n \in \{K, ..., 1\}\}$. This strictly dominates Bonferroni, as the total effective risk budget becomes $\sum_n \alpha^{[(n)]} = \sum_{n} (\alpha^* / n) > \alpha^*$. As a concrete example, \Cref{fig:graph_propagation} illustrates the Holm structure, with cutoffs $\{\alpha^{[i]}\} = \{0.10, 0.15, 0.30\} = \{\alpha^* / i: i \in \{3, 2, 1\}\}$. In contrast, a fixed-sequence test corresponds to a chain graph in which the entire risk budget is initially assigned to a single hypothesis, which is then propagated to a single new hypothesis when a decision is made. When every hypothesis is decidable, this is the most efficient method, as the cutoffs at decision will be the least restrictive: $\alpha^{[i]} = \alpha^*\text{ }\forall i$, exactly matching the theoretical upper bound. However, this approach is very committal -- when not every hypothesis is decidable (e.g., when two policies have the same mean or, in finite samples, when the means are very close), fixed sequence methods will get stuck, often very early, and all subsequent hypotheses will fail to decide. A Holm-like procedure thus sacrifices some efficiency per hypothesis ($\alpha^{[i]} < \alpha^*$) to hedge against becoming stuck evaluating any one very hard comparison. In general, there is a continuum of graph specifications which trade off hedging and performance, represented by the initial risk allocation and reallocation weights. Selecting these weights (near-)optimally for policy ranking is the first central concern of this work. 

\subsection{Graph Initialization using a Prior}
\label{sec:graph_instantiation}
To initialize the graphical multiple testing procedure, we leverage prior estimates of policy performance differences. Let \(\hat{\mu}\) denote empirical performance estimate (e.g., mean sucess rate or mean progress score) of a policy \(\pi\) from simulation or world model rollouts, or heldout hardware evaluations. Suppose hypothesis \(\Hyp_i\) involves the pairwise comparison of policies \(\pi^{i}_A\) and \(\pi^{i}_B\). Let \(d_i = |\hat{\mu}^{i}_A - \hat{\mu}^{i}_B|\) denote the magnitude of the predicted performance gap between the two policies compared by hypothesis \(\Hyp_i\).

A large \(d_i\) indicates a large predicted gap in policy performance. If this prediction is reflective of real-world performance, then the corresponding policies are easier to statistically distinguish, and the hypothesis \(\Hyp_i\) can typically be decided with fewer evaluation trials on hardware. Therefore, an efficient algorithm will assign large $\alpha^{[i]}$ to these hypotheses, in order that they decide quickly and can recycle the risk budget to other hypotheses. Conversely, when \(d_i\) is small, the policies are expected to exhibit similar performance, requiring more data to establish a statistically significant difference; as such, they should be assigned a smaller $\alpha^{[i]}$, from which they will accumulate recycled risk as easier evaluations terminate early.
We use the predicted performance gap to initialize both the local significance levels and the graph edge weights. Specifically, each hypothesis \(\Hyp_i\) is assigned an initial significance level and outgoing graph edge-weights as follows:
\begin{equation}
\label{eq:init}
    \alpha^{[i]}_0 = \alpha^{*} \frac{\exp(\gamma d_i)}{\sum_j \exp(\gamma d_j)},
\qquad
w^{[ij]} = \frac{\exp(\gamma d_j)}{\sum_{j\neq i} \exp(\gamma d_j)},
\end{equation}

where \(\alpha^{*}\) is the global significance level and hyperparameter \(\gamma \geq 0\) controls the strength of the prior. Larger values of \(\gamma\) place a greater fraction of the significance budget on hypotheses with larger predicted performance differences. Similarly, the initial graph edge-weights are such that successor hypotheses \(\Hyp_j\) with larger predicted performance gaps have higher weights \(w^{[ij]}\). When \(\Hyp_i\) is rejected, its local significance level \(\alpha^{[i]}\) gets redistributed to the remaining unresolved hypotheses in proportion to \(w^{[ij]}\), with larger fractions going to successors \(\Hyp_j\) that are predicted to be easier to decide. Allocating higher significance levels to these hypotheses further increases the chance that they are rejected early, thereby accelerating the redistribution of significance levels throughout the graph, and ultimately giving the hypotheses that are the most difficult to reject the highest possible significance level. 

Note that if multiple hypotheses reject at the same time, the specific order of applying graph propagation does not matter~\cite{bretz2009graphical}.
At the same time, if the prior is inaccurate, the graphical procedure still remains valid and adapts its allocation through subsequent significance-level transfers, preserving family-wise error control regardless of the quality of the prior. 

\subsection{Proof of Theorem 1}
\begin{proof}
    Given the assumptions of \Cref{theorem:validity}, the relies on isolating the decision-making of which policies to sample next during evaluation from the terminal (batch) graphical evaluation itself (see \Cref{alg:multitest}). The key intuition is that the anytime-validity of the p-values of individual hypotheses is not affected by the risk reallocation of the graph G insofar as the graph is initialized before evaluations are collected. That is, the initial risk allocation and initial graph weights are independent of any evaluations, and their reallocation is informed via a fixed graphical update rule, and not on the basis of any collected evaluations themselves. The set of anytime p-values $\{p_{t, i}\}_{i=1}^K$ is uniformly valid from the independence assumption, the measurability of $S$ (generating a stopping time), and the definition of p-processes (\cite{ramdas_admissible_2022}, Prop. 12.1). Assuming that the evaluator has a finite, yet potentially very large evaluation budget (\(N_{\max}\)) guarantees that the graphical procedure is run w.p. 1. Thus, the ranking decisions $D_t$ constitute the output of a graphical procedure, valid at $\alpha^*$, that is w.p. 1 given valid p-values for all constituent hypotheses. The validity of the graphical multi-hypothesis decisions given these inputs follows from \citet{bretz2009graphical} (c.f. their Appendix A and Algorithm 1).  
\end{proof}

\subsection{Detailed Implementation Notes for \Cref{alg:multitest}}
\label{supp:long_algorithm}
We provide additional, specific details for running~\algname~in \Cref{alg:multitest_long}. This exposition is identical in practice to the implementation of \Cref{alg:multitest}, but we more explicitly track various aspects of the filtration and data storage, as well as the mechanization of steps left implicit in the condensed version. 
\begin{algorithm2e}[htbp]
\caption{Multi-Hypothesis Active Policy Ranking with \algname}
\label{alg:multitest_long}

\KwIn{Type-I error limit $\alpha^* \in (0,1)$, policies \(\{\pi_1,\ldots \pi_M\}\), prior mean performance \(\{\hat{\mu}_1, \ldots, \hat{\mu}_M\}\)}
\KwOut{Ranking decisions $D_t$ and evaluation samples $\mathcal{F}_t$}
\emph{Initialize graph}: $G_{sim} \gets $ Set initial risk and weights using prior (\Cref{eq:init})\\
\emph{Duplicate graph}: $G \gets G_{sim}$ For the final sequential graphical analysis\\
\emph{Initialize decision list}: $D_0 \gets $ All policies equally ranked\\
\emph{Initialize evaluation step}: $t \gets 0$\\
\emph{Initialize dataset}: $\mathcal{F}_0 \gets \emptyset$\\
\emph{Initialize per-policy evaluation counts}: $n_j \gets 0,\, \forall j=1,\ldots, M$\\
Select initial policy: $j_0 \gets S(\mathcal{F}_0,G_{sim})$\

\While{$j_t \neq \emptyset$}{
    Run policy $j_t$ on environment $e_t \sim \mathcal{D}$:
    $r_{j_t}(e_t) \gets \mathrm{Rollout}(e_t,\pi_{j_t})$

    
    Update evaluation dataset:
    $\mathcal{F}_{t+1} \gets \mathcal{F}_t \cup \{(j_t, r_{j_t,t})\}$

    Update policy evaluation count:
    $n_{j_t} \gets n_{j_t}+1$

    \For{$i \in \{1,\ldots,K\}$}{
        Update anytime-valid p-value:
        $p_{t+1,i}\gets\mathrm{N\text{-}SCORE}(\mathcal{H}_i,\mathcal{F}_{t+1})
        $
    }
    \For{$i \in \{1,\ldots,K\}$}{
    \If{$p_{t+1,i} \leq \alpha^{[i]}_{t+1}$}{
        Update $G_{\mathrm{sim}}$ (\Cref{eq:update})
    }
}
    Select next policy to evaluate:
    $j_{t+1} \gets S(\mathcal{F}_{t+1},G_{sim})$ (Hedged sampling~\Cref{eqn:hedged_advantage_following})
    
    $t \gets t+1$
}
Get final ranking:
    $D_t
    \gets
    \mathrm{Graphical}(\{p_{t,i}\}_{i=1}^K,G,\alpha^*)$ (using \cite{bretz2009graphical})
    
\Return{$D_t,\mathcal{F}_t$}\
\end{algorithm2e}
\section{Proof of Validity Under Transitive Closure}
\label{appendix:proof_of_corollary}
Three key components are required to demonstrate this property. First, we formalize the ranking problem in the \emph{lifted space} of possible full rankings; second, we demonstrate that pairwise comparisons form a sufficient compression of this space so as to enable transitive closure; finally, we illustrate an equivalent decision rule in the lifted decision space that exactly recovers \algname with transitive closure. Thus, by showing that the decision rules are equivalent, we consequently demonstrate that adding transitive closure does not induce additional family-wise error. 

\subsection{The Lifted Representation}
To simplify exposition, we label each policy in a ranking problem by a letter or set of letters (e.g, ``A," ``B," ``C," ...). A comparison is explicitly represented by an inequality indicator, e.g, $\mathbbm{1}[A > B]$. A hypothesized ranking is represented by an ordered string. For example, $ABCDEF$ is understood to represent the global ranking hypothesis ``Policy A is better than Policy B, is better than Policy C, ..." The formal candidate true policy ranking is represented by a boldface string, e.g., \textbf{ABCDEF}. Testing decision rules correspond to a mapping from evidence on belief rankings (e.g., $ABCDEF$) to \textit{decisions} about candidates (e.g., \textbf{ABCDEF}). To make the exposition more concrete, we will carry through the practice of using six policies (i.e., the letters A through F); however, the argument generalizes to any number of policies, considering the rankings to essentially consist in strings of arbitrary length. 

For $K$ policies, there are 2$K\choose2$$= K^2 - K$ pairwise comparisons. There are $K!$ candidate true rankings. The important step is as follows: construct a `belief space' over rankings consisting of the set of all hypothesized rankings consistent with each pairwise comparison. For each comparison (e.g., $\mathbbm{1}[A > B]$), precisely half of the candidate true rankings (i.e., $K!/2$) are consistent with the comparison, by a substitution argument.\footnote{Specifically: every full ranking hypothesis must contain the constituent policies (here, $A$ and $B$). For every ranking consistent with the hypothesis, (e.g., \textbf{ABCDEF}), an inconsistent can be constructed by substituting $A$ and $B$ (e.g., \textbf{BACDEF}).} This will result in $K! (K^2 - K) / 2$ belief space rankings, consisting in $(K^2 - K)/2$ belief copies of each of the $K!$ candidate rankings (by a similar substitution argument, each candidate true ranking is associated with belief copies for precisely half of the pairwise hypotheses, e.g. candidate \textbf{ABCDEF} has a belief copy $ABCDEF$ associated with $\mathbbm{1}[A > B]$, but not with $\mathbbm{1}[B > A]$). 

Now, formally associate with each pairwise comparison the $K!/2$ hypothesized belief rankings with which is consistent in the form of a chain graph. For now, each chain graph can be ordered at random; in other words, the first and last elements of the chain do not matter. It is clear that, by construction, pairs of converse hypotheses (i.e., $\mathbbm{1}[A > B]$ and $\mathbbm{1}[B > A]$) fully partition a copy of the full candidate true ranking set; as there are $K\choose2$ pairs of converse hypotheses, each candidate true ranking again has $K\choose2$ belief copies. 

\subsection{Transitive Closure as Set Inclusion}
Understanding this partition idea, applying transitive closure amounts to considering relative set sizes of belief hypotheses induced by different combinations of pairwise comparisons. 

In particular, consider the variable policy indices $\alpha, \beta, \gamma$, and denote by $\Sigma_{\alpha > \beta}$ the set of belief rankings implied by the hypothesis $\mathbbm{1}[\alpha > \beta]$. Again, to be concrete, $ABCDEF \in \Sigma_{\alpha=A > \beta=B}$. We wish to show 
\begin{equation}
    (\Sigma_{\beta > \alpha} \cup \Sigma_{\gamma > \beta} ) \text{ }\cap \text{ } \Sigma_{\gamma > \alpha} = \Sigma_{\gamma > \alpha}.
\end{equation}
Equivalently: 
\begin{equation}
    \forall s \in \Sigma_{\gamma > \alpha}, s \in (\Sigma_{\beta > \alpha} \cup \Sigma_{\gamma > \beta} ).
\end{equation}
The implication is that, if the above statement is true, then rejecting the transitive closure hypothesis $\gamma > \alpha$ is implied by the two constituent decisions $\gamma > \beta$ and $\beta > \alpha$. 

To show that the statement is in fact true, we use a pigeonhole argument on the relative position of $\beta$: specifically, for any ranking $s \in \Sigma_{\gamma > \alpha}$, it is the case that the ranking is of the form $\underline{X}\gamma\underline{Y}\alpha\underline{Z}$. Thus, we consider the enumerated possibilities $\beta \in \{\underline{X}, \underline{Y}, \underline{Z}\}$. Clearly, $\beta \in \underline{X} \implies \beta > \alpha$ and $\beta \in \underline{Z} \implies \gamma > \beta$. Because $\beta \in \underline{Y}$ implies \emph{both}, we see that any possible ranking in $\Sigma_{\gamma > \alpha}$ implies at least one of $\gamma > \beta$ and $\beta > \alpha$. As such, we see that applying transitive closure does not increase the set of rejected hypotheses in the true ranking space; it only rejects copies of already-rejected belief rankings.

\subsection{A Consistent~\algname~Decision Rule}
We define the~\algname~decision rule as follows: reject candidate hypothesis if any of its belief copies rejects. Because each belief candidate is tied to precisely one pairwise comparison, this is equivalent to rejecting a possible ranking if and only if any pairwise implication is inconsistent with the accumulated evidence. This is exactly the implicit~\algname~decision rule! Thus, there is an~\algname-consistent decision rule in the lifted belief space over full candidate rankings, which when implemented allows for transitive closure without increase in FWER. Thus, applying transitive closure as an~\algname~`wrapper' does not imply rejection of a ranking candidate for which rejection was not already implied by preceding rejections; it only clarifies that, in the compressed space of pairwise comparisons, an additional hypothesis could indeed be rejected as a function of the underlying logic in the lifted space of candidate full rankings. For these reasons, FWER over formal rejection of candidate true rankings \emph{is not increased by the application of transitive closure.} This completes the proof. 

\section{Additional Technical Remarks}
\label{appendix:remarks}

\begin{remark}[IID Assumption]
Note that we do not assume i.i.d increments within policy rollouts, only across different task and environment draws. Without this assumption, policy comparison itself would not be well-defined, as \emph{expected} success rate cannot be meaningfully compared under arbitrary distribution shifts. Separately, one would then have to make assumptions on the type of distribution shift, and there are usually too many possible forms of non-iid structure to characterize effects on FWER in general. Thus, the i.i.d assumption is nearly universally adopted in evaluation both across engineering disciplines and statistical methods (including confidence intervals and Bradley-Terry methods for robot evaluation). In robotics, there exist best practices to more closely realize the assumption~\cite{kress2024robot,barreiros2026careful}, and we briefly note that~\algname's adaptive sampling scheme implicitly integrates some of these practices, including alternating evaluations.

One can use non-iid detection to rigorously check for a broad class of FWER-relevant violations of the i.i.d. assumption, and prioritize detection of particular non-i.i.d. structure like mean drift (arising from camera bumps, mechanical wear, etc). We check the hardware evaluation data for each of the 34 policies considered in the paper; no policies show significant evidence of drift at level $\alpha = 0.1$, indicating the data in the paper does not demonstrate significant violation of the i.i.d assumption or performance degradation, which attenuates concerns of distribution shift in our specific context. 
\end{remark}

\begin{remark}[One-sided vs two-sided tests] 
\label{rem:onesided}
Standard methods in multi-hypothesis correction require one-sided atomic null hypotheses. In our setting, one such hypothesis would be of the form: $\mathcal{H}_0: \mu_i \geq \mu_j$, $\mathcal{H}_1: \mu_i < \mu_j$. Of course, \emph{a priori}, we do not know which policy is better, necessitating the inclusion of the mirrored hypothesis $\mathcal{H}_0: \mu_i \leq \mu_j$, $\mathcal{H}_1: \mu_i > \mu_j$. A potential issue arises with the inclusion of $\mu_i = \mu_j$ in both null hypotheses. This means that, when the means are exactly equal, they belong to the null \emph{in each direction}; thus, \emph{any declarative ranking decision is incorrect} (under the convention that ranking implies non-equality). In this setting, if the probability that either one-sided hypothesis rejects is set to $\alpha$, then the probability of an incorrect decision when the means are equal is bounded at $2\alpha$. As such, a union bound correction (i.e., running each one-sided test at $\alpha/2$) is in general required. If the convention is taken that equality allows either ranking decision to be correct, this is not necessary. Throughout the paper, we present results without this correction, under the reasoning that it is highly unlikely that the complex distribution over tasks and environments would ever induce \emph{exactly identical} means for a policy pair. However, the results can be easily adapted to apply this two-sided correction by instantiating the procedure at significance level $\alpha/2$. 
\end{remark}

\begin{remark}[Evidence Threshold]
    Each $\alpha^{[i]}$ can be understood as controlling the `evidence threshold' for a pairwise comparison: a higher $\alpha^{[i]}$ corresponds to a looser threshold, generally requiring fewer trials before the hypothesis can be rejected. As such, deflating each hypothesis test's significance level by a factor of \(K\) increases requisite hardware evaluations, limiting the statistical power and sample efficiency of the ranking procedure. Thus, our goal is to maximize statistical power by finding looser cutoffs that still guarantee FWER control, thereby deciding as many atomic hypotheses as possible while minimizing the hardware evaluation effort required to do so. 
\end{remark}

\begin{remark}[Statistical Power]
    For single hypothesis testing, $\beta \in [0, 1]$ corresponds to the probability of a failed detection under the alternative. In our multihypothesis setting, we account for partial rankings by assigning $\beta \in [0, 1]$ as the expected fraction of atomic hypotheses which fail to reject (in a given sampling budget), where $(1-\beta)$ is the fraction rejected (statistical power). The `full ranking power' $1-\beta_{FR} \in [0, 1]$ can be defined as the probability that all atomic hypotheses reject (i.e, that we recover a full ranking). It will always be the case that $(1-\beta_{FR}) \leq (1-\beta)$. This can be seen by defining $X \in [0, 1]$ to be the random variable denoting the fraction of rejected hypotheses; then, $1-\beta_{FR} = \mathbb{E}\big[\lfloor X \rfloor \big] \leq \mathbb{E}\big[X\big] = 1-\beta$.
\end{remark}

\section{Choice of Baselines}
\noindent\textbf{FWER-controlling Baselines.} 
We compare against a (sequentialized) standard Bonferroni correction, which controls for the global risk level \(\alpha^*\) by assigning each atomic hypothesis test a portion of the risk budget and controlling the overall error rate via a union bound. This approach was used for rigorous multi-policy comparison in \textbf{LBM} policy evaluations~\cite{barreiros2026careful}. In addition, we also compare against a weighted Bonferroni method, in which atomic hypotheses that are predicted to be difficult to decide based on the prior receive a larger fraction of the significance budget. While Holm~\cite{holm_1979} and fixed sequence~\cite{wiens_fixed_2003} are generally more efficient in the batch regime, they do not admit simple extensions to the sequential setting, unlike Bonferroni. Introducing sequential versions of these tests (as \(\algname_0\) and \(\algname_\infty\), respectively) for the purposes of active evaluation, and proving their FWER control, is the first core contribution of our work. This contribution precisely enables risk recycling for active evaluation and more efficient evaluation decision-making; the comparison of \(\algname_0\) and \(\algname_\infty\) to Bonferroni approaches directly measures this benefit.

Furthermore, \(\algname_0\) and \(\algname_\infty\) also act as an ablation for the benefits of prior information, the second core contribution of this work.
Thus, we quantify these benefits by comparing the fully general \(\algname_\gamma\) to \(\algname_0\) and \(\algname_\infty\); we then quantify the benefits of risk recycling by comparing the latter approaches to sequentialized (weighted) Bonferroni approaches. Per the preceding footnote, we then emphasize that all of these considered approaches already implicitly bake in significant benefits with respect to any and all batch evaluation procedures. 
As a second fair baseline for the benefits of using prior information, we also instantiate a weighted Bonferroni procedure in which the initial risk allocation is in inverse proportion to the predicted difficulty of comparison \(d_i\), to inform understanding of the benefits of graph asymmetry \emph{independent} of the risk recycling mechanism:  
\[
\alpha^{[i]}_0 = \alpha^{*} \frac{\exp(-\gamma d_i)}{\sum_j \exp(-\gamma d_j)}.
\]

\section{Hardware Experiments}
\label{sec:hardware}
\noindent \textbf{RoboArena.} We evaluate seven policies with open-source evaluation datasets from RoboArena~\cite{atreya2025roboarena}. For the prior, we use both heldout evaluations (20 per policy) or world model rollouts in Ctrl-World~\cite{guo2025ctrl} on 40 randomly chosen DROID environments. One of the policies, PG-Bin is an outlier in terms of performance. While the overall correlation between prior policy performance and real progress rate is high with PG-Bin included, the heldout evaluations and world model evaluations do not provide a strong prior ranking on the relative performance of the remaining policies (see~\Cref{fig:roboarena_sim}). 
\begin{figure}[h]
    \centering
    \includegraphics[width=\linewidth]{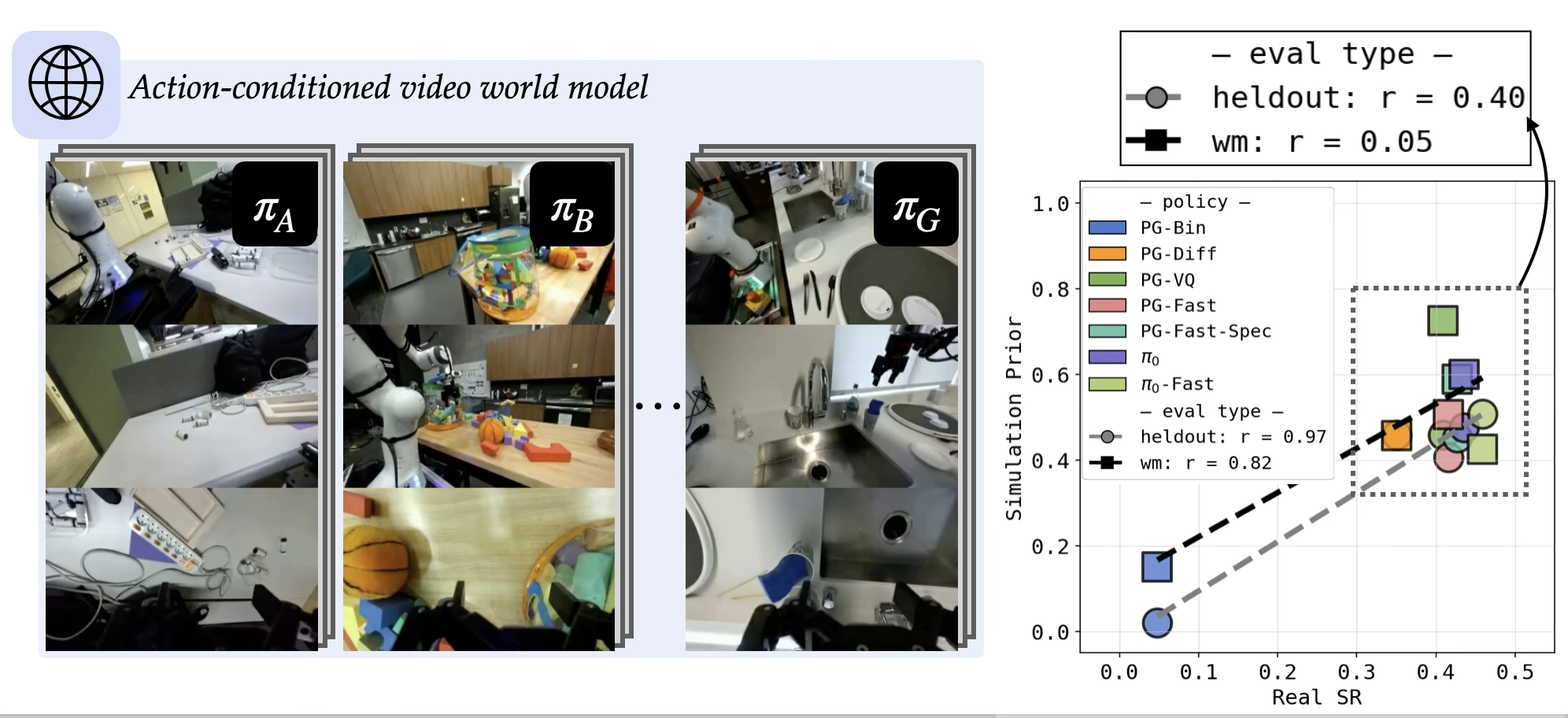}
    \caption{Prior policy performance quality for RoboArena policies using 20 heldout evaluations and 40 random DROID scenes for world model evaluation.}
    \label{fig:roboarena_sim}
\end{figure}

\begin{table}[h]
\centering
\begin{tabular}{clcrrr}
\toprule
Rank & Letter & Policy & BT score & Empirical mean & \# comparisons \\
\midrule
1 & \textbf{J} & Dream-Zero & $1.228 \pm 0.309$ & 0.606 & 58 \\
2 & \textbf{I} & $\pi_{0.5}$ & $0.639 \pm 0.140$ & 0.476 & 564 \\
3 & \textbf{A} & $\pi_0$-Fast & $0.502 \pm 0.133$ & 0.455 & 739 \\
4 & \textbf{K} & FB-195k & $0.428 \pm 0.737$ & 0.479 & 8 \\
5 & \textbf{H} & FB-525k & $0.306 \pm 0.373$ & 0.380 & 33 \\
6 & \textbf{D} & PG-FSQ & $0.283 \pm 0.132$ & 0.418 & 762 \\
7 & \textbf{E} & PG-Diff & $0.262 \pm 0.133$ & 0.404 & 751 \\
8 & \textbf{C} & PG-Fast-Spec & $0.202 \pm 0.126$ & 0.429 & 953 \\
9 & \textbf{B} & PG-Fast & $0.179 \pm 0.128$ & 0.432 & 951 \\
10 & \textbf{N} & DAM & $0.043 \pm 0.296$ & 0.337 & 60 \\
11 & \textbf{L} & FB-420k & $-0.047 \pm 0.394$ & 0.326 & 29 \\
12 & \textbf{F} & $\pi_0$ & $-0.136 \pm 0.128$ & 0.351 & 1009 \\
13 & \textbf{O} & FB-285k & $-0.137 \pm 0.614$ & 0.442 & 12 \\
14 & \textbf{M} & FB-90k & $-0.226 \pm 0.696$ & 0.291 & 10 \\
15 & \textbf{G} & PG-Bin & $-3.525 \pm 0.234$ & 0.047 & 629 \\
\bottomrule
\end{tabular}
\caption{Bradley-Terry-Davidson preference rankings on all 15 RoboArena policies.}
\label{tab:roboarena_bt_rankings}
\end{table}

\textbf{LBM.} Another large-scale evaluation dataset that we consider is the Large Behavior Models dataset from TRI~\cite{barreiros2026careful}. Specifically, we compare three policies in the nominal setting for each task: single task policy, LBM-zeroshot, and LBM-finetuned on the target task. We also compare the performance of these policies in both nominal and out-of-distribution settings. For prior policy performance, we use the simulation evaluations for the corresponding policy and task provided in~\cite{badithela_reliable_2025}. We consider tasks with both simulation and hardware evaluations for these policies such as \emph{PushCoasterToMug}, \emph{TurnMugRightSideUp}, as well as aggregate performance of the policies on these tasks. We have 50 hardware evaluations per policy per task. The real-sim correlations for TRI-LBM experiments  are $0.9965,\, -0.5481,\, 0.6515,\, 0.4117,\, 0.8439$,\, and $0.4226$, respectively. Despite the high correlation in the case of PushCoasterToMug in the nominal environment, two policies (specialist and LBM-zeroshot) are too close to statistically distinguish in 50 trials. Furthermore, the choice of \(\gamma=1\) for \algname may have limited the sample complexity. A more aggressive \(\gamma\) for \(\algname\) could yield better sample complexity. We leave the tuning of parameter \(\gamma\) to future study.

\textbf{DROID-task.}
\begin{figure}[h]
    \centering
    \includegraphics[width=\linewidth]{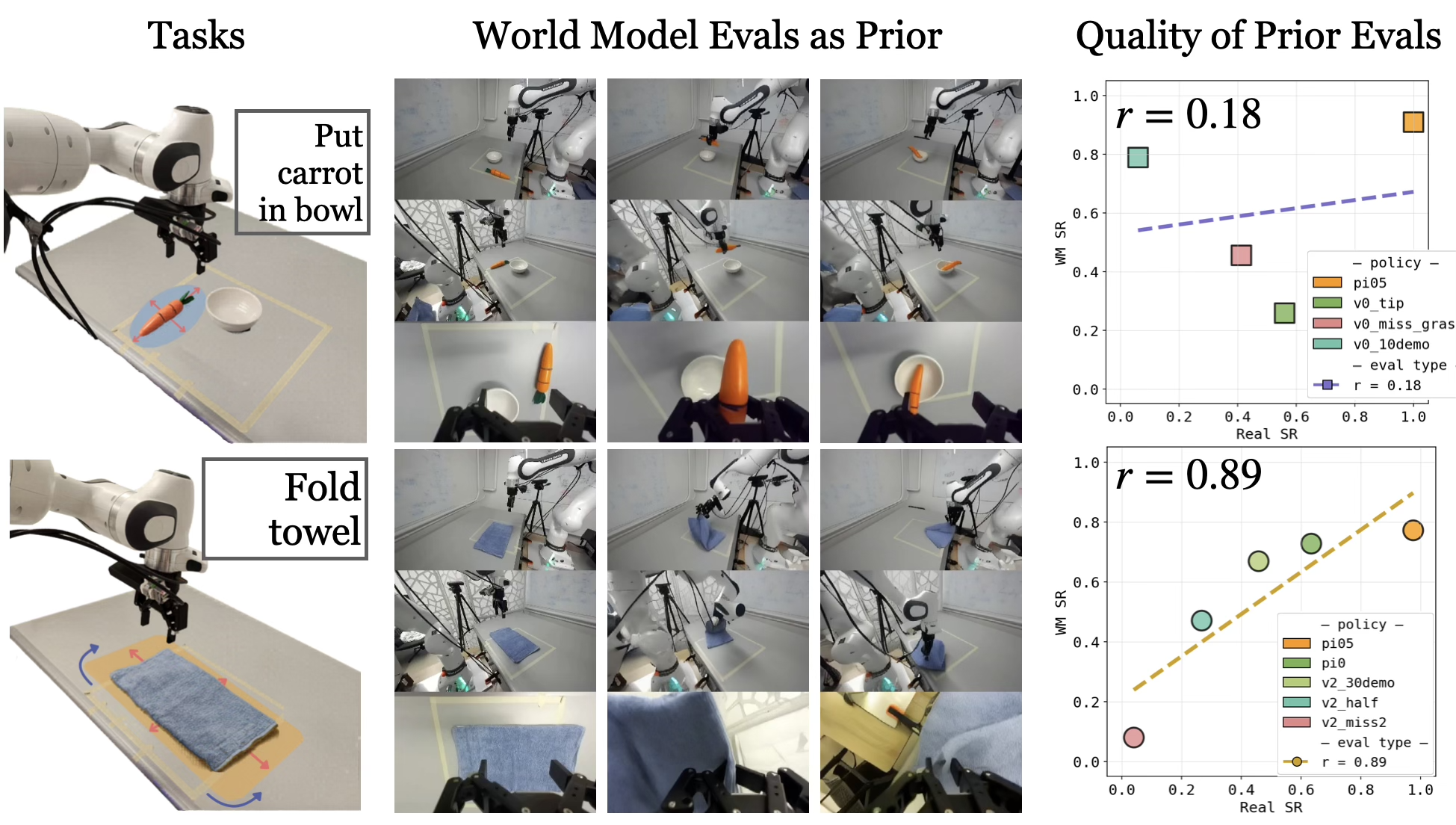}
    \caption{Prior policy performance quality for tasks fold towel and pick up carrot tasks considered in~\textbf{DROID-task}. In the carrot task, we have poor policy ranking information from the prior while for the fold towel task, we have a good policy ranking.}
    \label{fig:droid_local}
\end{figure}
We conduct this set of experiments to illustrate: i) the robustness of~\algname under imperfect priors, and ii) the improvement of ~\algname over using no prior information \(\algname_0\). Both LBM and RoboArena involve evaluation over large and diverse sets of initial conditions, for which existing simulators and video world models yield only weak predictors of real-world performance. While this highlights an important open challenge in constructing accurate evaluation priors, it also provides a challenging testbed for assessing the robustness of \algname when prior information is imperfect.

\noindent \textbf{Policies.} For each task, we trained diffusion policies with varying training datasets to induce a realistic spectrum of policy capabilities. Some policies were trained on only successful human demonstrations, while others were trained with limited data or datasets containing suboptimal and failed trajectories.
In addition to these task-specific diffusion policies, we also evaluate two generalist VLA policies, $\pi_{0}$ and $\pi_{0.5}$~\cite{intelligence2025pi}.

\noindent \textbf{World Model Checkpoints.} For the fold towel task, we use an off-the-shelf video world model, Ctrl-World~\cite{guo2025ctrl}. Ctrl-World is initialized from the stable video diffusion \cite{blattmann2023stable} model and trained on the DROID dataset ~\cite{khazatsky2024droid}. For the carrot task, we report results using a play-data fine-tuned world model variant from \cite{yin2026playworld}, which is further trained on 30 hours of autonomous interaction data and provides better dynamics modeling and visual quality. The initial conditions chosen for evaluation are different from the set of initial conditions used for play data collection.

\subsection{Statistical Reliability of Ranking}
For RoboArena evaluation data, we see instances of both partial and complete rankings: \Cref{tab:roboarena_rankings_full} lists policy rankings for FWER-controlling methods using all available data. As an example of a complete policy ranking, consider a subset of four DROID policies from the RoboArena which have sizeable gaps in empirical performance: \(\pi_0\)-FAST (\A), PG-Diff (\E), \(\pi_0\) (\F), and PG-Bin (\G). For these policies, \algname and its variants recover a complete policy ranking when using world model evaluations as a prior. Note that while \algname and its variants produce a full ranking, weighted Bonferroni is unable to distinguish the top two policies.  This indicates that though weighted Bonferroni allocates a high initial \(\alpha\) for the hypothesis comparing  \(\pi_0\)-FAST (\A) and PG-Diff (\E), it is insufficient to statistically distinguish them.

\textbf{Partial Policy Rankings.} When we consider all seven RoboArena policies included in~\cite{atreya2025roboarena}, we get three additional policies that have very close empirical performance: PG-FAST \((\B)\), PG-FAST-Specialist \((\C)\), and PG-FSQ \((\D)\). As shown in~\Cref{tab:roboarena_rankings_full}, none of the FWER-controlling procedures produce a full ranking as there is not sufficient statistical evidence. For example, the violin plots of $\algname$ (\Cref{fig:active_half_c}) provides a better visual interpretation of the partial ranking: policies \(\A,\, \E,\, \F, \G\) are separable but \(\B,\C,\D\) remain statistically indistinguishable from each other and from policies \(\A\) and \(\E\) as illustrated by their Compact Letter Display (CLD)~\cite{piepho2004algorithm}. In contrast, for the same set of policies, $\algname_\infty$ (fixed sequence approach) is unable to distinguish \(\B,\C,\D\) from policies \(\A, \E\), and \(\F\). Though fixed sequence is the most efficient when all hypotheses can be decided, this is challenging to know \emph{a priori}. This example illustrates the brittleness of fixed sequence to a sub-optimal ordering of hypotheses under imperfect priors.
\begin{table}
\centering
    \resizebox{\linewidth}{!}{%
        \begin{tabular}{c|l|c|c}
            \hline
            & \textbf{Methods}
            & RoboArena4
            & RoboArena7
            \\ 
            \hline
            \multirow{3}{*}{\textbf{Ours}}
            & \algname
            & {\(\A > \E > \F > \G\)}
            & \(\A > \E > \F > \G\), \(\partialtxt{\{\A\B\C\D\}}\), \(\partialtxt{\{\B \C \D \E\}}\) \\
            & $\algname_{0}$
            & {\(\A > \E > \F > \G\)}
            & \(\A > \E > \F > \G\), \(\partialtxt{\{\A\B\C\D\}}\), \(\partialtxt{\{\B \C \D \E\}}\) \\
            & $\algname_{\infty}$
            & {\(\A > \E > \F > \G\)}
            & \(\A > \E > \F > \G\), \(\partialtxt{\{\A \B \C \D\}}\), \(\partialtxt{\{\B\C \D \E\}}\), \(\partialtxt{\{\B \C \D \F\}}\) \\
            \hline
            \multirow{2}{*}{\textbf{Baselines}}
            & Bonferroni
            & {\(\A > \E > \F > \G\)}
            & \(\A > \E > \F > \G\), \partialtxt{\(\{\A \B \C \D\}\)}, \partialtxt{\(\{\B \C \D \E\}\)}\\
            & Weighted Bonferroni
            & {\(\partialtxt{\{\A \E\}} > \F > \G\)}
            & {\(\partialtxt{\{\A \B \C \D \E\}} > \F > \G\)}\\
            \hline
        \end{tabular}%
    }
    \caption{%
        Ranking DROID policies from RoboArena~\cite{atreya2025roboarena} with all available evaluations using world model evaluations as a prior. Each capitalized alphabet represents a policy: \(\pi_0\)-FAST (\A), PG-FAST \((\B)\), PG-FAST-Specialist \((\C)\), PG-FSQ \((\D)\), PG-Diff (\E), \(\pi_0\) (\F), and PG-Bin (\G). Policies in a group and highlighted in \partialtxt{gold} are not statistically distinguishable by the method.
    }
    \label{tab:roboarena_rankings_full}
\end{table}

\subsection{Sample Efficiency}
We include additional results (\Cref{fig:tri_nominal}) from the TRI dataset, which continues to show the advantages of \algname over other baselines.
\begin{figure}[!htb]
    \centering

    \begin{subfigure}[t]{0.3\linewidth}
        \centering
        \includegraphics[width=\linewidth]{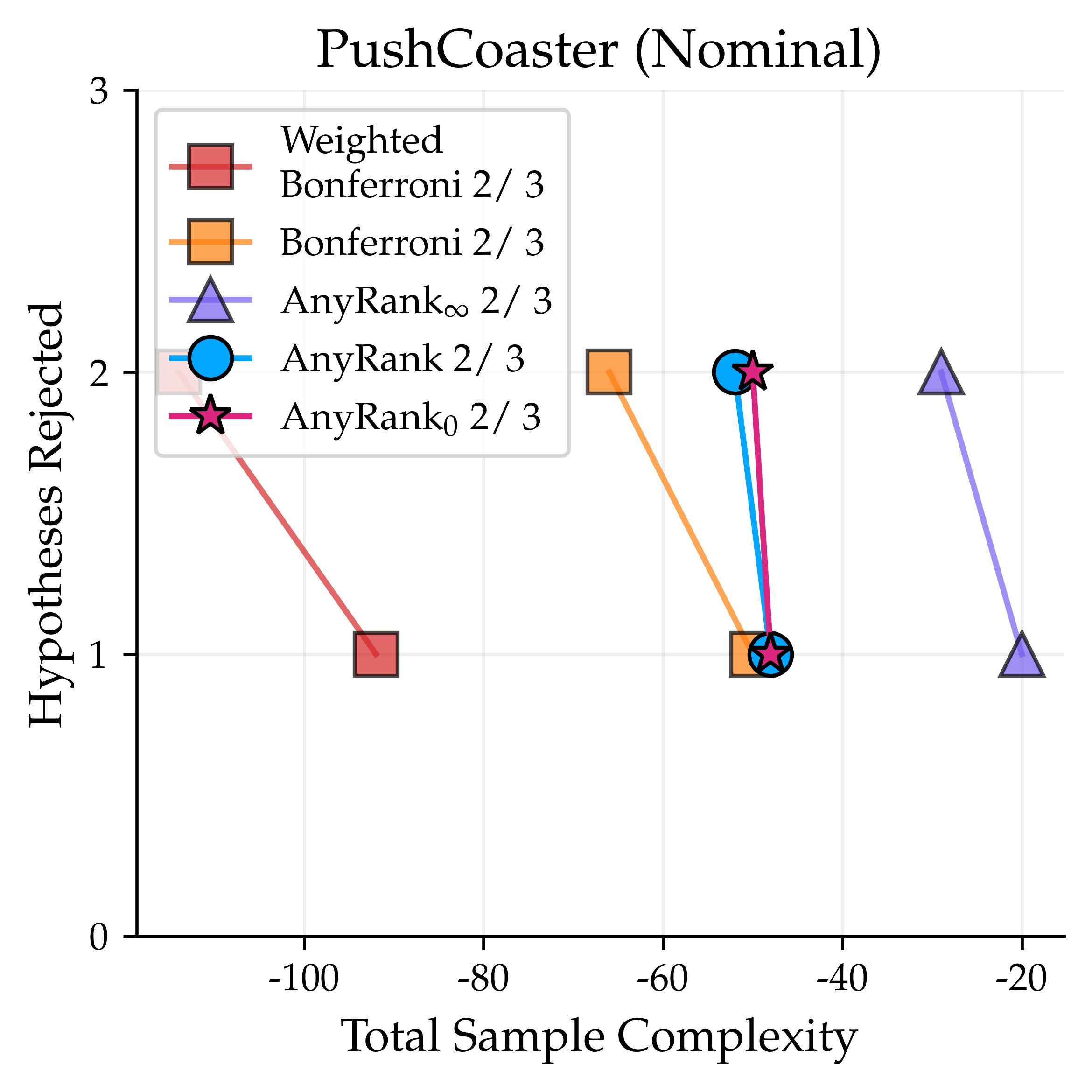}
        \caption{PushCoaster.}
    \end{subfigure}
    \hfill
    \begin{subfigure}[t]{0.3\linewidth}
        \centering
        \includegraphics[width=\linewidth]{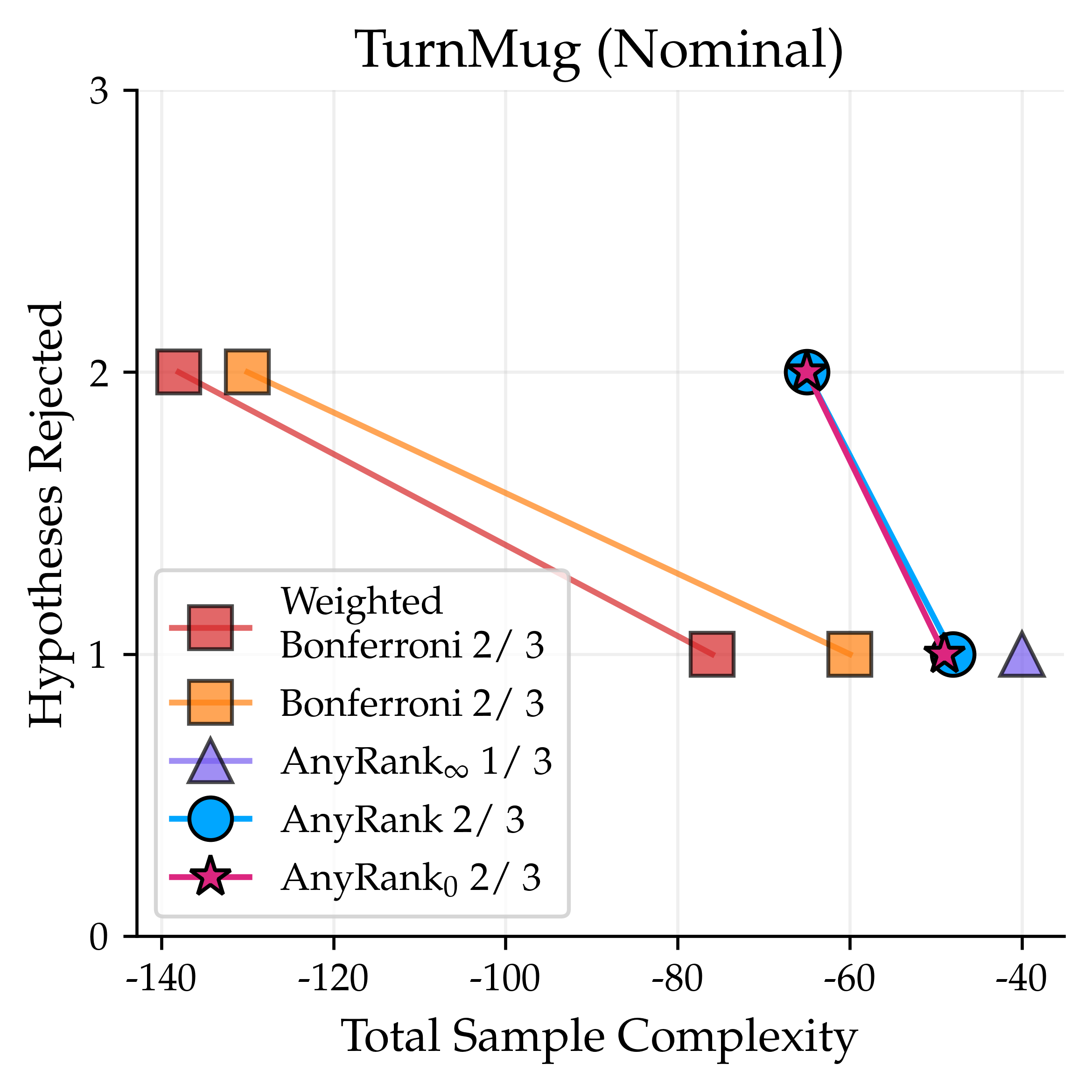}
        \caption{TurnMug.}
    \end{subfigure}
    \hfill
    \begin{subfigure}[t]{0.3\linewidth}
        \centering
        \includegraphics[width=\linewidth]{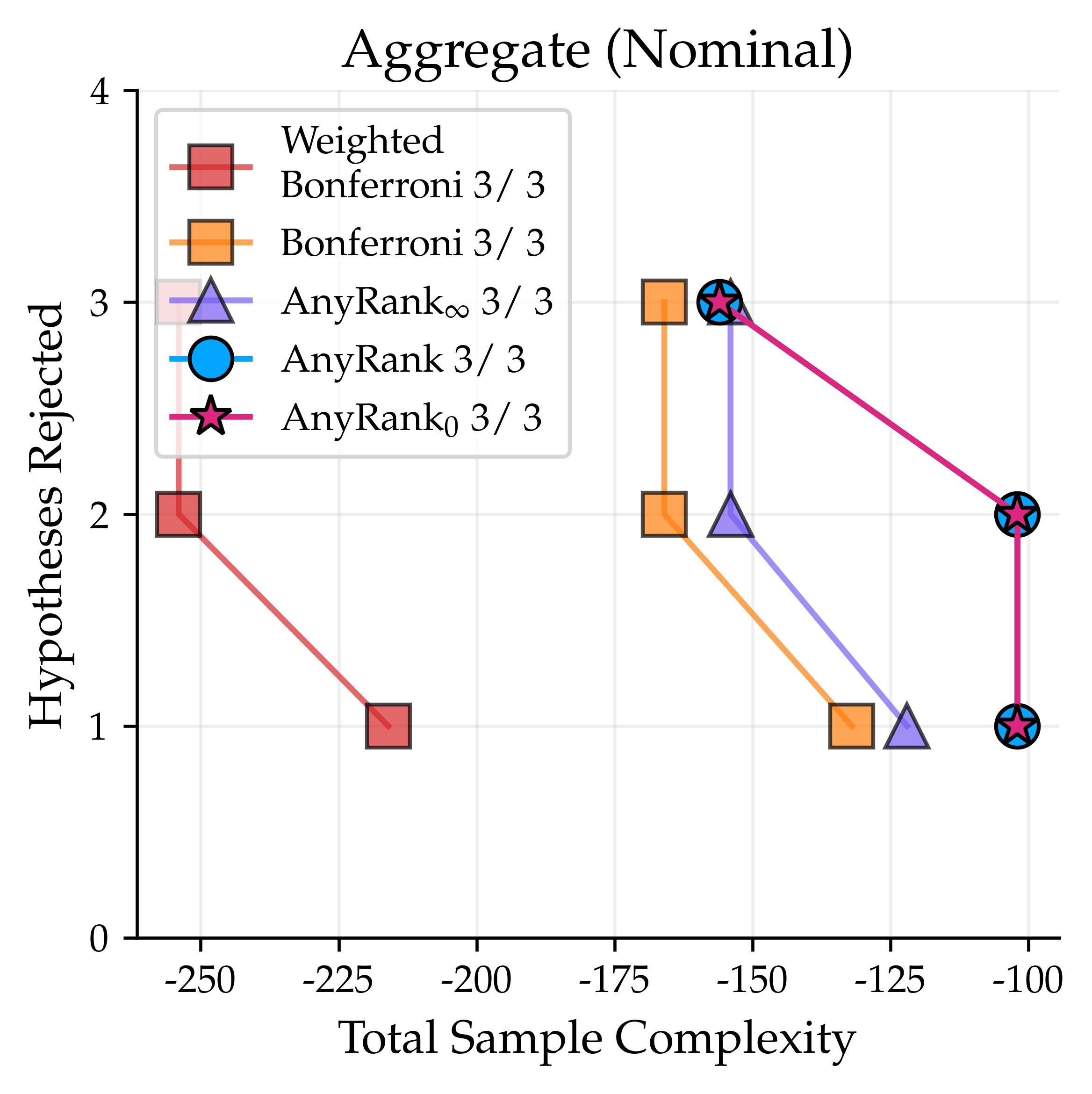}
        \caption{Aggregate.}
    \end{subfigure}
    \caption{\textbf{Pareto Curves for TRI-LBM under nominal evaluation environments.} }
    \label{fig:tri_nominal}
\end{figure}

\subsection{Reliability Against Imperfect Simulation Priors} 
\begin{figure}[h]
    \centering
    \includegraphics[width=\linewidth]{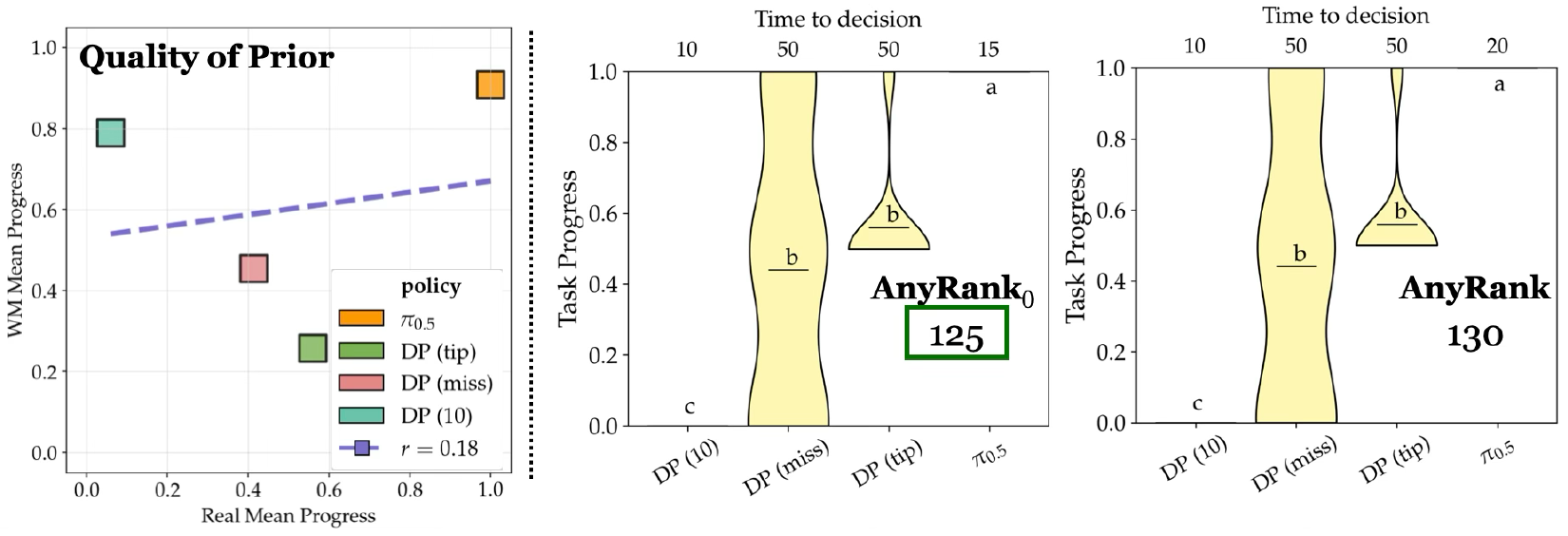}
    \caption{
     \textbf{Reliability against imperfect priors.} \algname (at \(\gamma=10\)) requires slightly more trials (130) than~$\algname_{0}$, which does not use any prior information.
    }
    \label{fig:bad_prior}
\end{figure}
To illustrate the robustness of ~\algname to imperfect prior, we consider the \textbf{DROID-task} series of experiments for the \emph{Put carrot in bowl} task. The diffusion policies and generalist policies compared are illustrated in~\Cref{fig:droid_local}. Note that in this instance, the correlation between policy performance in the world model and the real world is low: \(r=0.18\), indicating that the prior information is a poor proxy of real-world policy performance. In \Cref{fig:bad_prior}, observe that with very weak and mismatched priors, \algname can become marginally less efficient than $\algname_0$, but does not sacrifice validity. The \textbf{RoboArena} and \textbf{LBM} experiments also correspond to settings with weak simulation priors, which results in \algname being inefficient compared to $\algname_{0}$ that does not use the prior information in constructing the graph weights.

\section{Synthetic Experiments} 
\label{subsec:BT_vs_Holm}
On real-world evaluations, the \emph{true mean} of task progress of a policy is unknown. Given this lack of access to the true labels, one cannot assess \textbf{FWER Control} or \textbf{Asymptotic Inconsistency}. For real data, we can partially validate \textbf{Miscalibration} of baseline procedures via comparison of atomic hypotheses to~\algname, which is guaranteed to be correct with probability $1-\alpha$. Additionally, we can assess \textbf{Inconsistency} by running procedures on data subsets of varying sizes, both in the number of trials as well as in the number of policies. To validate the remaining properties, we require the construction of synthetic data for which oracle knowledge of the true underlying means is known, in order to validate using the true labels over \emph{many independent evaluation datasets} that \textbf{FWER Control} and \textbf{Asymptotic Consistency} are jointly maintained in general. 


\subsection{Ranking Procedures}
\textbf{Bradley-Terry Variants.}
As in \Cref{sec:evaluation}, in these experiments we use a standard Bradley-Terry procedure (``BT (Classical)" in \Cref{tab:roboarena_rankings}), as well as the variant constructed in \cite{atreya2025roboarena} (``BT (EM)"). These methods are hypothesized to not control FWER, nor to be consistent estimators for general problems of ranking mean policy performance.

\textbf{Bootstrapped Bradley-Terry Approaches.}
In order to (asymptotically) ensure FWER control in the BT procedures, we implement a bootstrapped BT for each batch. To bootstrap, we begin with a BT variant (described in the preceding subsection) and generate $M=5000$ redraws i.i.d. with replacement from the batch dataset, with each redraw of size equal to the nominal batch dataset, $N$, and take the empirical $\{\alpha/(2K), 1-\alpha/(2K)\}$ quantiles of the BT scores for each policy independently. We then compare the policies pairwise by utilizing these confidence intervals; any pair of intervals with \emph{no overlap} correspond to a significant rejection. A Bonferroni correction over these pairwise comparisons guarantees (asymptotic) $1-\alpha$ FWER control. Unfortunately, because these intervals are constructed in a brittle way, attempting to utilize graphical methods or other more efficient multitest corrections is infeasible, because it would require computing a combinatorial number of different confidence intervals.

In this setting, we hypothesize that for most problems, the rankings will be FWER-robust in that decisions will only be made if there is significant evidence. However, we hypothesize that it will result in a significant loss of statistical power. Additionally, we hypothesize that this variant will inherit the inconsistency properties from the standard BT variants. 


\begin{figure}[htb]
    \centering
    \includegraphics[width=0.9\linewidth]{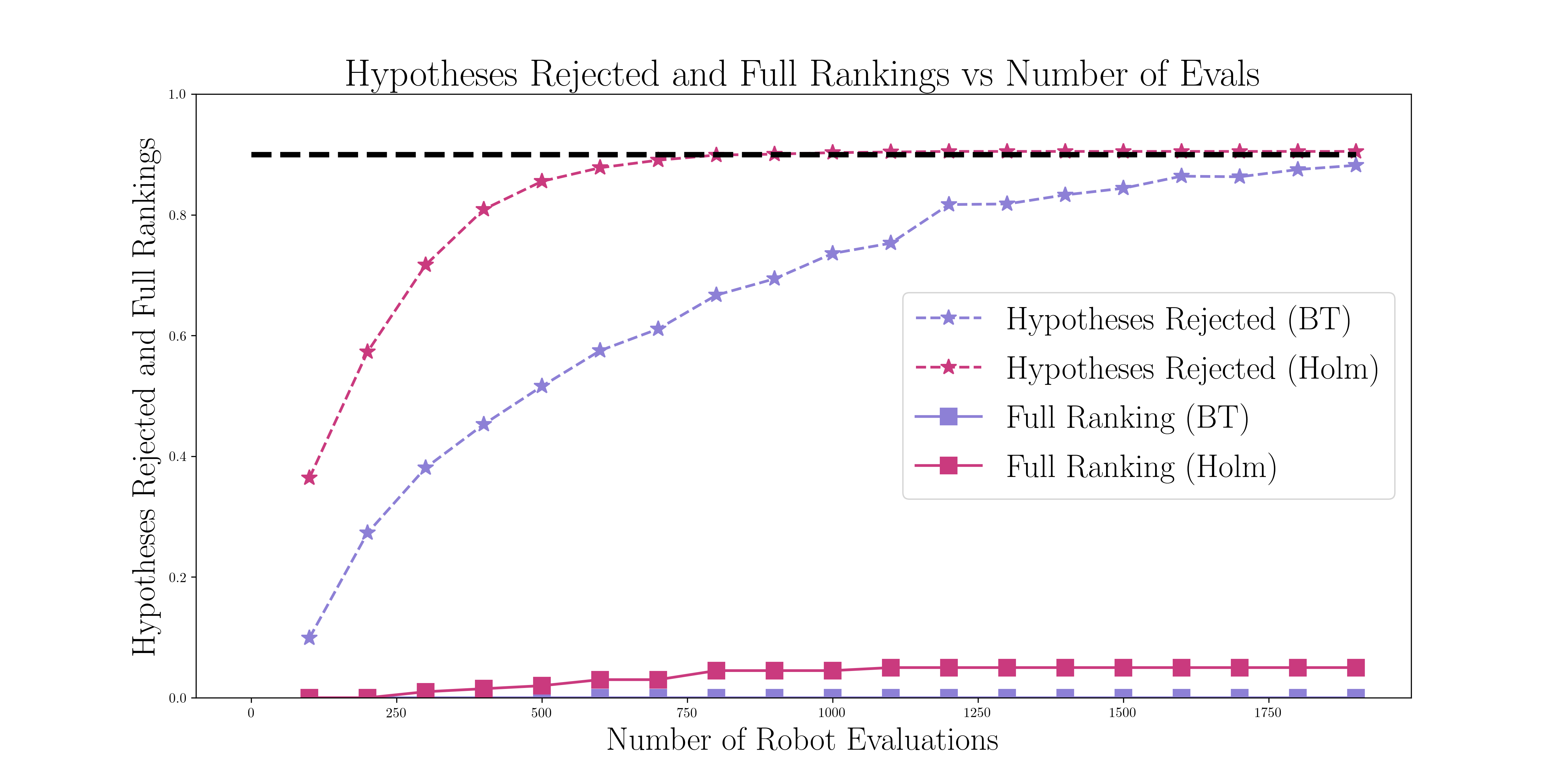}
    \caption{\textbf{Validating FWER Control with Synthetic Data.} Setting \textbf{HO-I}: we demonstrate the improved efficiency of~$\algname_{0}$~over BT$_\alpha$ on synthetic data. Here, there are five policies corresponding to 10 pairwise comparisons. Two policies share the same mean, resulting in an \emph{infeasible} full ranking. We demonstrate here the time to a complete partial ranking (denoted by dashed lines) for each approach, as well as the time to full rankings (denoted by solid lines). \emph{Note} that a full ranking in this context is equivalent to a Type-1 error, as one of the atomic comparisons [between the policies with equal means] should never decide. We observe that~$\algname_{0}$~is able to better utilize the risk budget $\alpha = 0.05$ and active sampling in order to more quickly arrive at complete partial rankings.}
    \label{fig:alpha_control_synthetic}
\end{figure}
\subsection{Experiments with Synthetic Data}
The central limitation of Bradley-Terry procedures is the difficulty of ascertaining a level of Type-1 Error on practical datasets. For instances in which the underlying metric of interest is unobservable, BT often remains the best option. However, in the robotics evaluation context, BT is often used to map evaluator preference to rankings of overall performance; this setting is therefore fundamentally different, because robot performance is directly observed in the evaluation rubric.\footnote{\label{footnote:analogy}BT methods are generally used in ranking contexts, like athletic performance, when an absolute and shared performance measure is \textbf{not} present. For example, this is the case in sports driven by win-loss records -- soccer, baseball, basketball -- where the observed quantities (wins and losses) are only proxies for the quantity of interest (overall ability, quality of play, etc). This mismatch necessitates BT-type rectification. By contrast, BT is less effective for sports with shared performance measures, like track and field events. Here, the win-loss measure is derived from a shared, global measure; the race times themselves are directly observed. In using BT in ChatbotArena \cite{chiang2024chatbot} for LLM-based evaluation, there is value added because the measure of LLM response quality is imprecise, as many responses equally satisfy the query. In the robotics context, preferences may be a useful proxy for some estimation problems, but do not make sense as a proxy for empirical performance \emph{when empirical performance is directly observed.}} As a result, properties like FWER control become far more important, as there are many plausible methods to construct a ranking. 

To validate Type-1 Error (FWER) control, we run \algname, Bradley-Terry (BT), and Bootstrapped Bradley-Terry (BT$_\alpha$) on many instances of synthetic Bernoulli data. This data structure corresponds to the setting of ranking policies by task success rate. By running many independent instances of ranking problems, we can validate the properties of the ranking algorithm (which in general cannot be done on real datasets). 

For each synthetic policy ranking instance, the maximal number of (fictitious) evaluations per policy is set as $N_{max} = 2000$. There are $M = 5$ policies, each with associated true (known) Bernoulli mean $\mu_i$. For all BT variants, we add a small amount of random, continuous noise to easily break ties (ensuring that the resulting data variance is strictly less than that of the Bernoulli data used by \algname), and assign preferences based on pairwise scores (higher evaluation mean implies the method is preferred). The global ranking is FWER controlled at level $\alpha = 0.05$. 

There are three semantically different evaluation settings to consider, categorized by the difficulty of the ranking components (homogeneous or heterogeneous) and the feasibility of a full ranking (feasible or infeasible):  
\begin{itemize}
    \item \textbf{HO-F}: ``Homogeneous Feasible;" $\{\mu_i\} = \{0.10, 0.30, 0.50, 0.70. 0.90\}$
    \item \textbf{HE-F}: ``Heterogeneous Feasible;" $\{\mu_i\} = \{0.20, 0.35, 0.45, 0.65, 0.90\}$
    \item \textbf{HO-I}: ``Homogeneous Infeasible;" $\{\mu_i\} = \{0.20, 0.40, 0.40, 0.60, 0.80\}$
\end{itemize}
For each setting, we rerun the entire ranking procedure on 200 independent redraws to validate FWER control and test empirical power. As previously mentioned, a family-wise error is synonymous with an incorrect ranking, and we seek to validate that the error rate is controlled at level $\alpha^*$. Power is measured by a sharp time-to-full-ranking metric, and a softened version of the metric corresponding to the fraction of hypotheses rejected vs time (if the fraction equals 1, then we have a full ranking).

\subsubsection{Limitations of Bradley--Terry: Asymptotic Inconsistency}
\label{subsubsec:experiments_asymptotic_inconsistency}
On synthetic data, we illustrate a separate and important phenomenon of applying BT procedures for ranking: the estimated Bradley-Terry ranking may converge to an incorrect ordering, even with infinite data and when directly using empirical performance comparison as the preference rule. To illustrate this phenomenon, we construct a hypothetical three-policy ranking problem, where the task metric is discrete partial credit over ten subgoals, resulting in eleven possible evaluation outcomes for each rollout ($\{0, 0.1, 0.2, ..., 1.0\}$) on a given task instance. We consider three policies, two of which display high variance, achieving rewards of Bernoulli($0.3$) and Bernoulli($0.7$), respectively. The last policy is low-variance, achieving scores of $0.8$ and $0.9$ with equal probability. We have defined by construction a setting where $\mu_1 = 0.3$, $\mu_2 = 0.7$, and $\mu_3 = 0.85$.
\begin{figure}[t]
    \centering
    \begin{subfigure}[b]{0.48\linewidth}
        \centering
        \includegraphics[width=\textwidth]{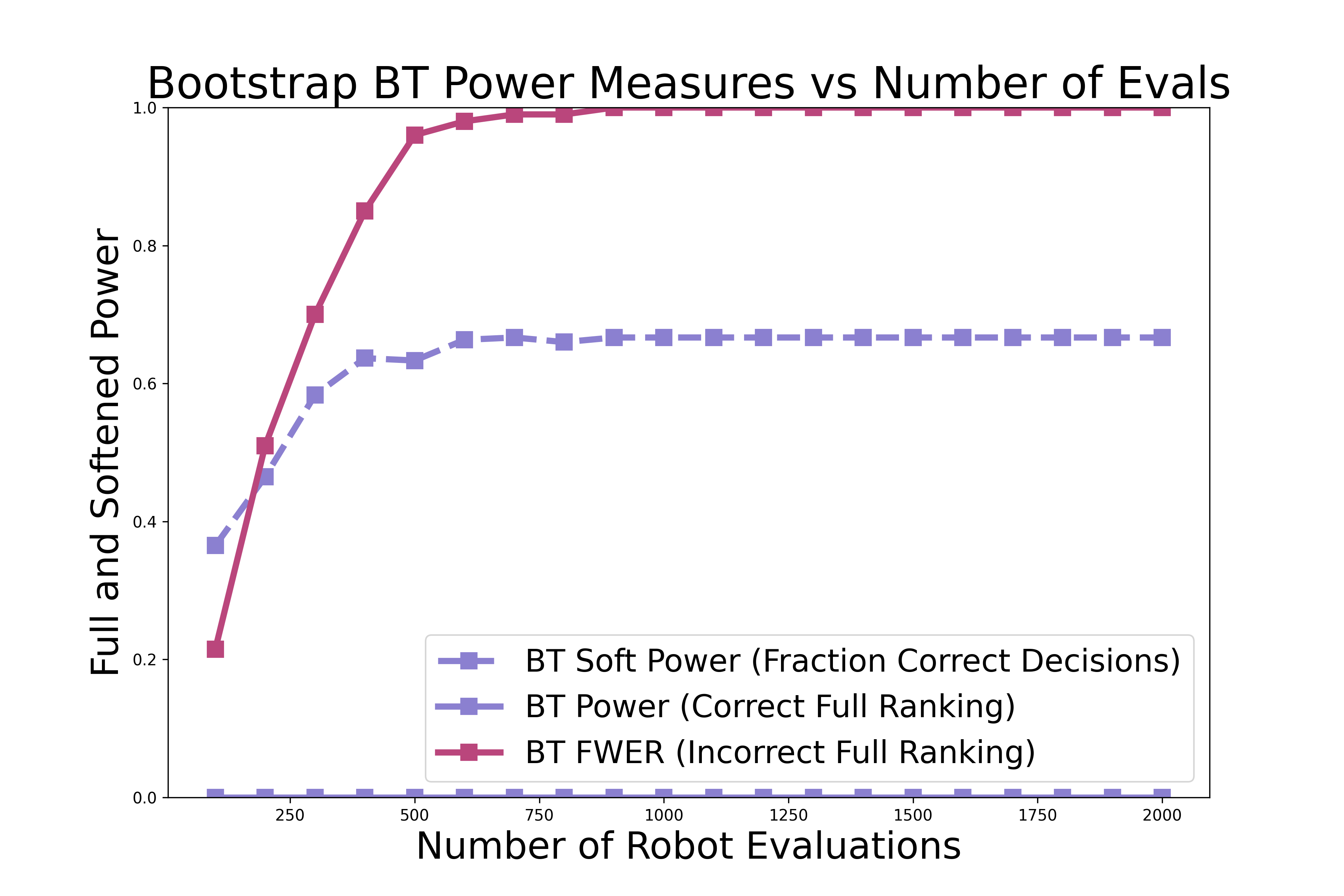}
        \caption{Here, we show the FWER (erroneous ranking fraction) as a function of dataset size. In this setting, BT$_\alpha$ is increasingly confident of a ranking that is inconsistent with empirical policy performance, causing the FWER to grow larger as the dataset size increases.}
        \label{fig:inconsistent_bt}
    \end{subfigure}
    \hfill 
    \begin{subfigure}[b]{0.48\linewidth}
        \centering
        \includegraphics[width=\textwidth]{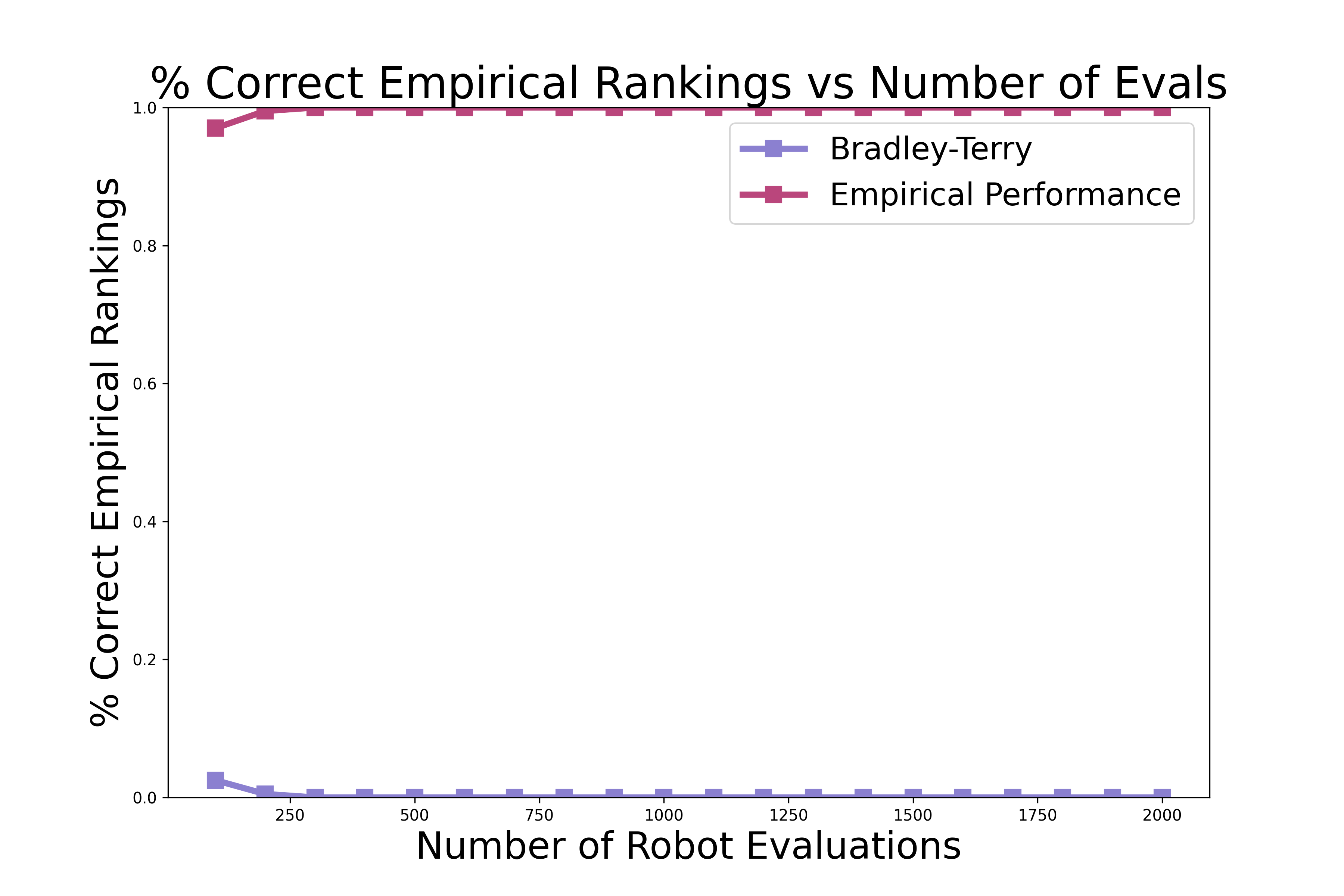}
        \caption{The counterexample constructed to illustrate \Cref{fig:inconsistent_bt} does not affect direct policy performance-based evaluation approaches, as the empirical orderings remain consistent. Procedures like~\algname~are thus preferable for policy ranking.}
        \label{fig:right}
    \end{subfigure}
    \caption{\textbf{Demonstrating Bootstrap BT Inconsistency.} We demonstrate by counterexample (constructed in \Cref{subsubsec:experiments_asymptotic_inconsistency}) the inconsistency of Bradley-Terry approaches in general, even in the presence of robust bootstrapping corrections.}
    \label{fig:combined}
\end{figure}


 We now show that BT methods are \emph{inconsistent} in this case: that is, in the infinite-data limit, any BT method using empirical performance as the preference indicator will rank the policies inconsistently with the actual order of said policies' means. Concretely, the true order is $\mu_1 < \mu_2 < \mu_3$; any Bradley-Terry approach will instead return $\mu_1 < \mu_3 < \mu_2$. The intuition is that progress-based preference can reward variance, in this instance by allowing the Ber($0.7$) policy to `beat' both of the other policies approximately 70\% of the time. This manifests in the behavior illustrated in \Cref{fig:inconsistent_bt}, where FWER (measured by erroneous rankings) is not controlled, even when applying a bootstrap correction at level $\alpha$ (denoted BT$_\alpha$). This occurs precisely because more data causes the Bradley-Terry procedure to become more confident in the incorrect ranking placing $\mu_2 = \mathbb{E}[\text{Ber}(0.7)]$ ahead of $\mu_3 = 0.85$.
 
\subsection{Validating Type-1 Error (FWER) Control on Synthetic Data}
We begin with case \textbf{HO-I}, in which two policies have the same mean and therefore cannot be ordered (under the convention that ordering $\mu_i$ vs $\mu_j$ implies that $\mu_i \neq \mu_j$). In this setting, nine of the ten hypotheses are decidable (resulting in a partial ranking). In \Cref{fig:alpha_control_synthetic} we illustrate the FWER control of both \algname and BT$_{\alpha}$ (i.e., bootstrapped BT) at level $\alpha^* = 0.05$ (solid lines). In this setting, a full ranking corresponds to an error, as it implies we have ordered the two policies that have the same mean; therefore, any instance in which that occurs is a Type-1, or Family-Wise, error. Consequently, the solid lines correspond to the Type-1 Error rate. As can be observed, our method uses nearly the full risk budget (an empirical error rate of $0.045$), while BT$_\alpha$ is safe, but much more conservative. 

By contrast, as might be expected, the standard BT approach (not shown in \Cref{fig:alpha_control_synthetic}) results in a coin flip: in the most generous case, the ranking is correct $50\%$ of the time.\footnote{This is because we allow BT to count one of the policies as $\epsilon$-better than the other. In fact, in the theoretical case where the means are identical, BT is wrong essentially 100\% of the time, because it almost never produces ties. Thus, any ordering is technically incorrect.} Importantly, the user will have no way to quantitatively evaluate this instability on real data.

\begin{figure}[htb]
    \centering
    \includegraphics[width=\linewidth]{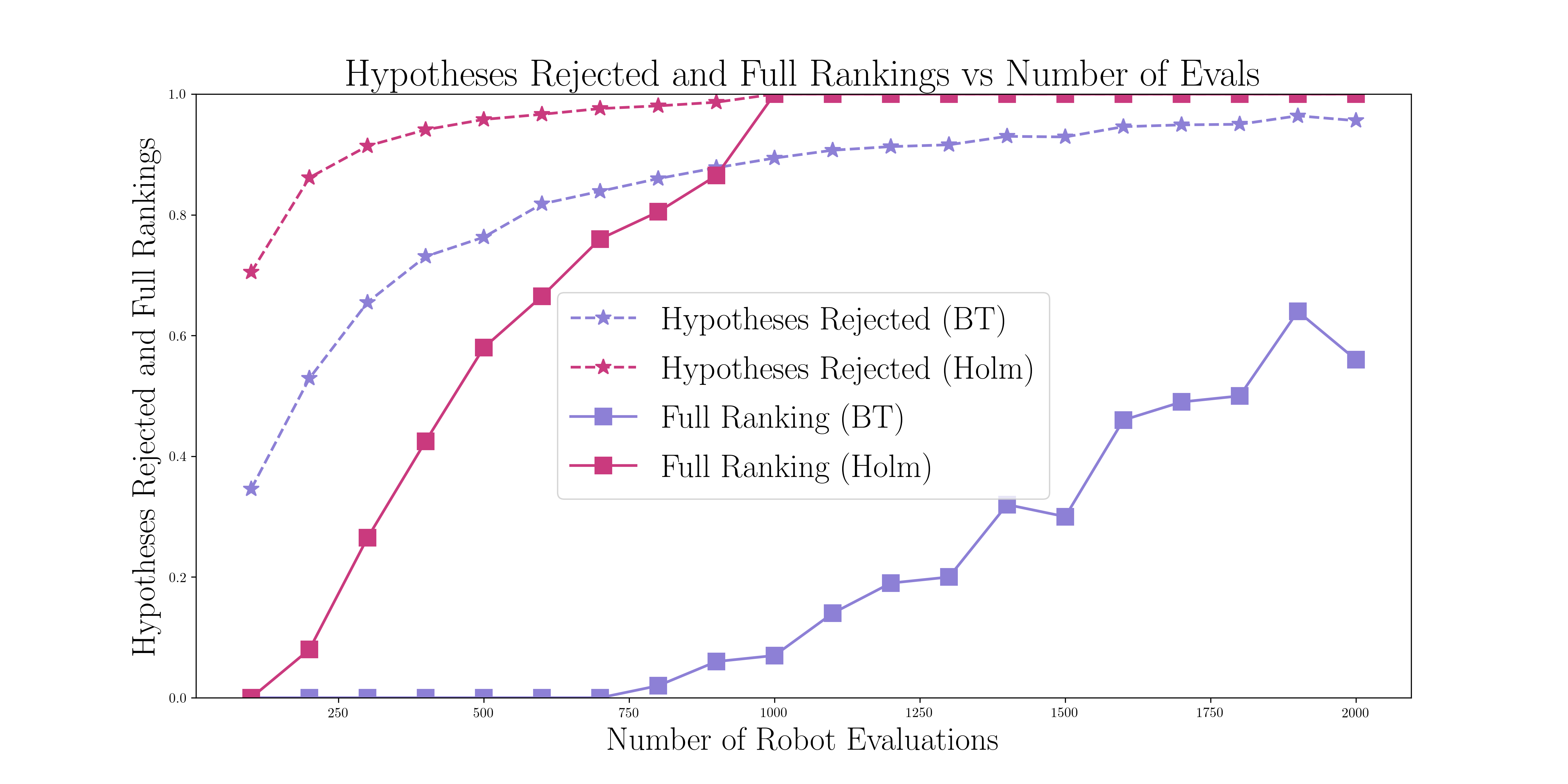}
    \caption{\textbf{Testing Statistical Power with Synthetic Data.} Setting \textbf{HE-F}: we demonstrate the improved efficiency of~$\algname_{0}$~over BT$_\alpha$ on synthetic data. Here, there are five policies corresponding to 10 pairwise comparisons. The policies have unequally spaced means, resulting in heterogeneity in time to \emph{feasible} full ranking. We demonstrate here the time to $k$-hypothesis partial rankings (dashed lines) for each approach, as well as the empirical time to \emph{full ranking} (solid lines). We observe that~$\algname_{0}$~is able to better utilize the risk budget $\alpha = 0.05$ and active sampling in order to more quickly arrive at partial and complete rankings across the spectrum of sample complexity, illustrated by the blue lines being above their red counterparts.}
    \label{fig:power_heterogeneous_synthetic}
\end{figure}
\subsection{Note: Empirical Performance is as Strong as Non-Bootstrapped BT}
We briefly develop the analogy given in \Cref{footnote:analogy} by considering the efficacy of preference-based BT (using preferences based on empirical performance) vis-a-vis a related non-FWER controlling approach: ranking by empirical performance. In \Cref{fig:invalid_approaches_FRF_H}, \Cref{fig:invalid_approaches_FRI}, and \Cref{fig:invalid_approaches-FRF_I} we show that across data scales in the synthetic setting, using the current empirical means to rank the policies is as effective on aggregate as preference-based BT rankings, when FWER control is not a concern. This result reinforces the notion that, when the ultimate metric of interest is directly observable, it is generally optimal to estimate that metric based on the direct constituent observations of the metric, and not on proxy measures.\footnote{Assuming, for example, that the relative data availability is similar, etc. \citet{badithela_reliable_2025} is an example where the incorporation of a cheaper simulation proxy saves evaluation on more expensive hardware. In our setting, by contrast, two policy rollouts yield either a preference score or two evaluation rollout scores at approximately equal cost.} 
\subsection{Investigating Empirical Power Of Valid Methods On Synthetic Data}
To investigate empirical power, we proceed in reverse order through the three cases. Beginning with \textbf{HO-I} in \Cref{fig:alpha_control_synthetic}, we now consider the progress towards a completed partial ranking (the limit for this case) for both bootstrapped BT and \algname, as shown in the dashed lines. Consistent with conservatism interpretation in the preceding section, we observe that \algname is significantly quicker to reject more hypotheses (illustrated by the blue curve being strictly higher than the red curve), in particular as the hypotheses become more difficult (this is illustrated by the horizontal gap growing as the fraction of rejections increases). 

\begin{figure}
    \centering
    \includegraphics[width=0.9\linewidth]{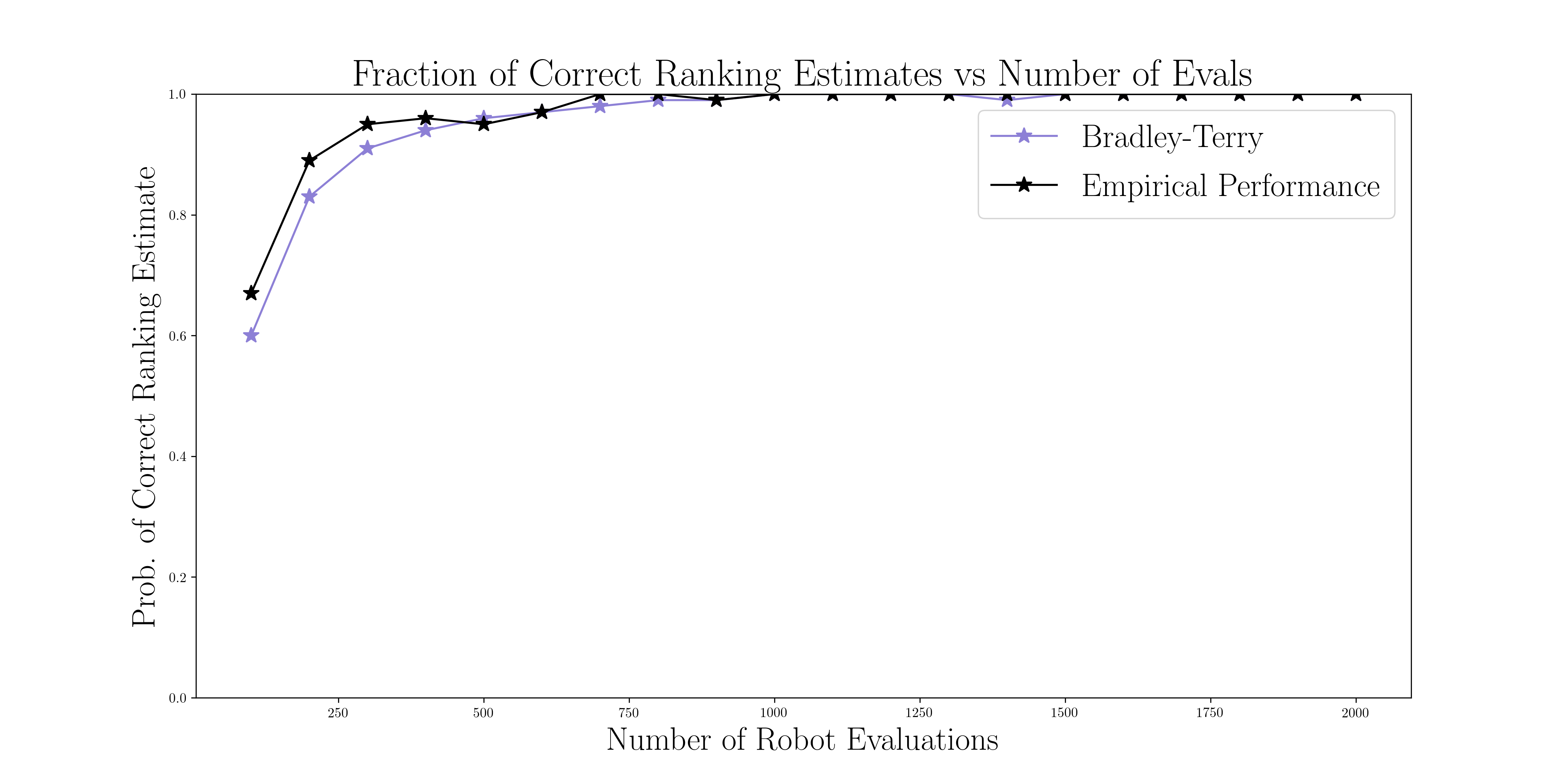}
    \caption{\textbf{Empirical Performance is As Good as Preferences (Feasible Homogeneous).}}
    \label{fig:invalid_approaches_FRF_H}
\end{figure}

\begin{figure}
    \centering
    \includegraphics[width=0.9\linewidth]{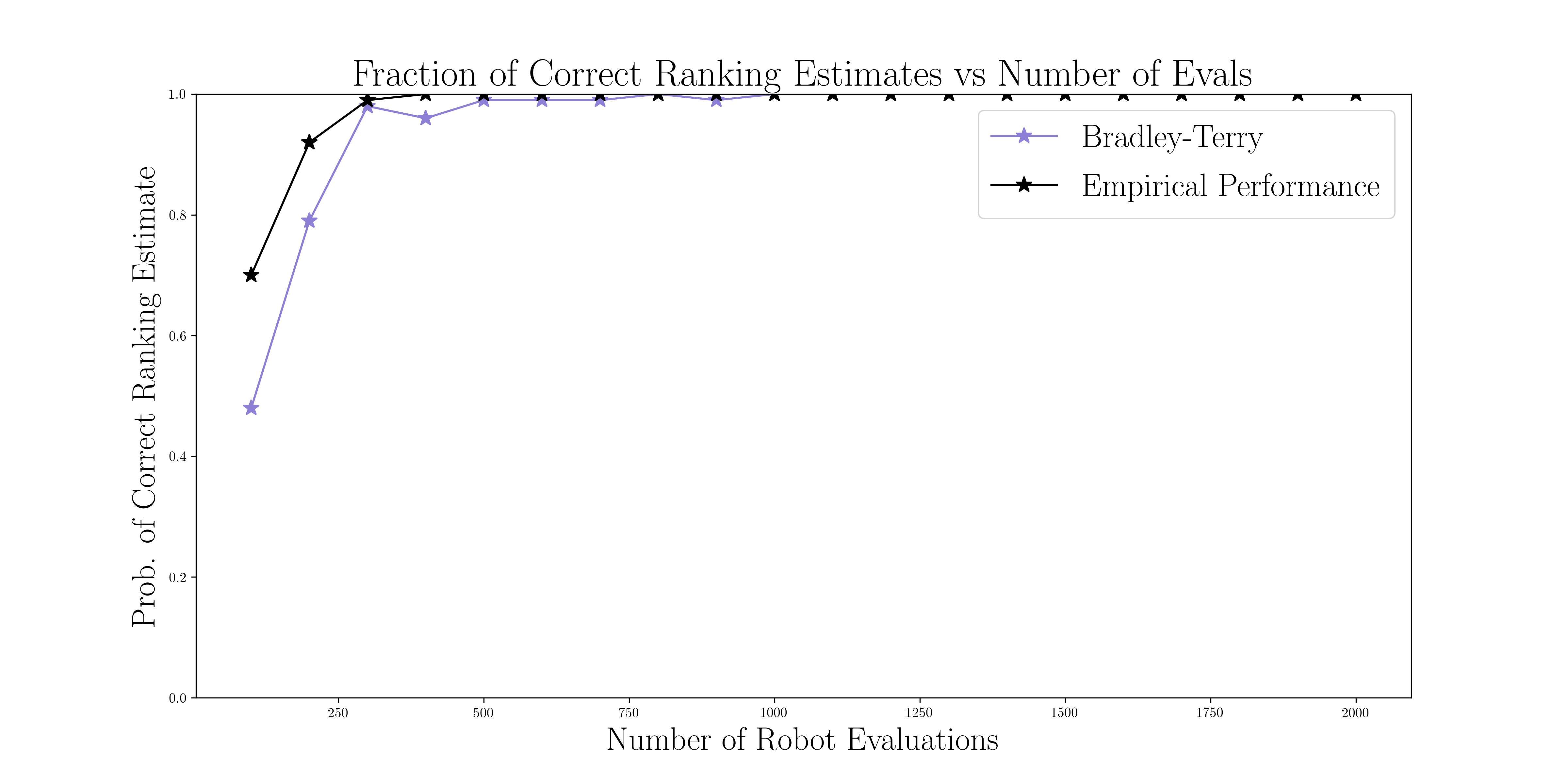}
    \caption{\textbf{Empirical Performance is As Good as Preferences (Feasible Inhomogeneous).}}
    \label{fig:invalid_approaches-FRF_I}
\end{figure}

\begin{figure}
    \centering
    \includegraphics[width=0.9\linewidth]{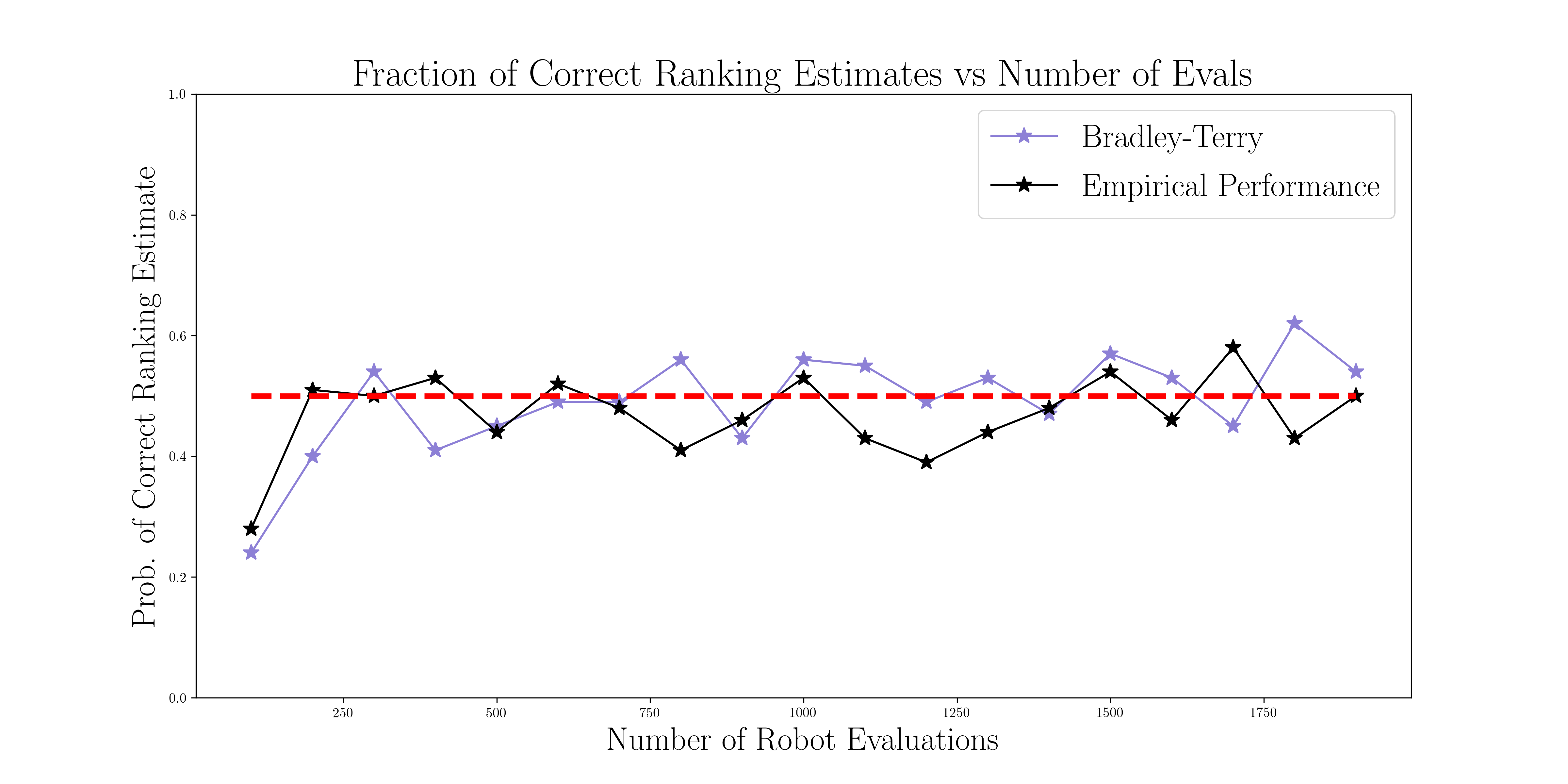}
    \caption{\textbf{Empirical Performance is As Good as Preferences (Infeasible).}}
    \label{fig:invalid_approaches_FRI}
\end{figure}

These trends are borne out in the feasible settings as well. Beginning with \textbf{HE-F} in \Cref{fig:power_heterogeneous_synthetic}, we observe that for gaps of varying difficulty, the presence of necessarily smaller and harder-to-distinguish gaps (e.g., between $\mu_2 = 0.35$ and $\mu_3 = 0.45$) only amplifies the strengths of \algname over BT$_\alpha$. We observe that \algname achieves full empirical ranking power at approximately $1,000$ evaluations, whereas BT$_\alpha$, though it confidently rejects most of the hypotheses, struggles to progress to full rankings, only reaching 60\% power after $2,000$ evaluations. Continuing to \textbf{HO-F} in \Cref{fig:power_homogeneous_synthetic}, we see that even in the best case for BT$_\alpha$ (with homogeneous gaps), the difference in performance remains significant. Here, \algname achieves full empirical power in approximately $750$ evaluations, whereas BT$_\alpha$ requires just over $2,000$ to reach that level. 

\begin{figure}
    \centering
    \includegraphics[width=0.9\linewidth]{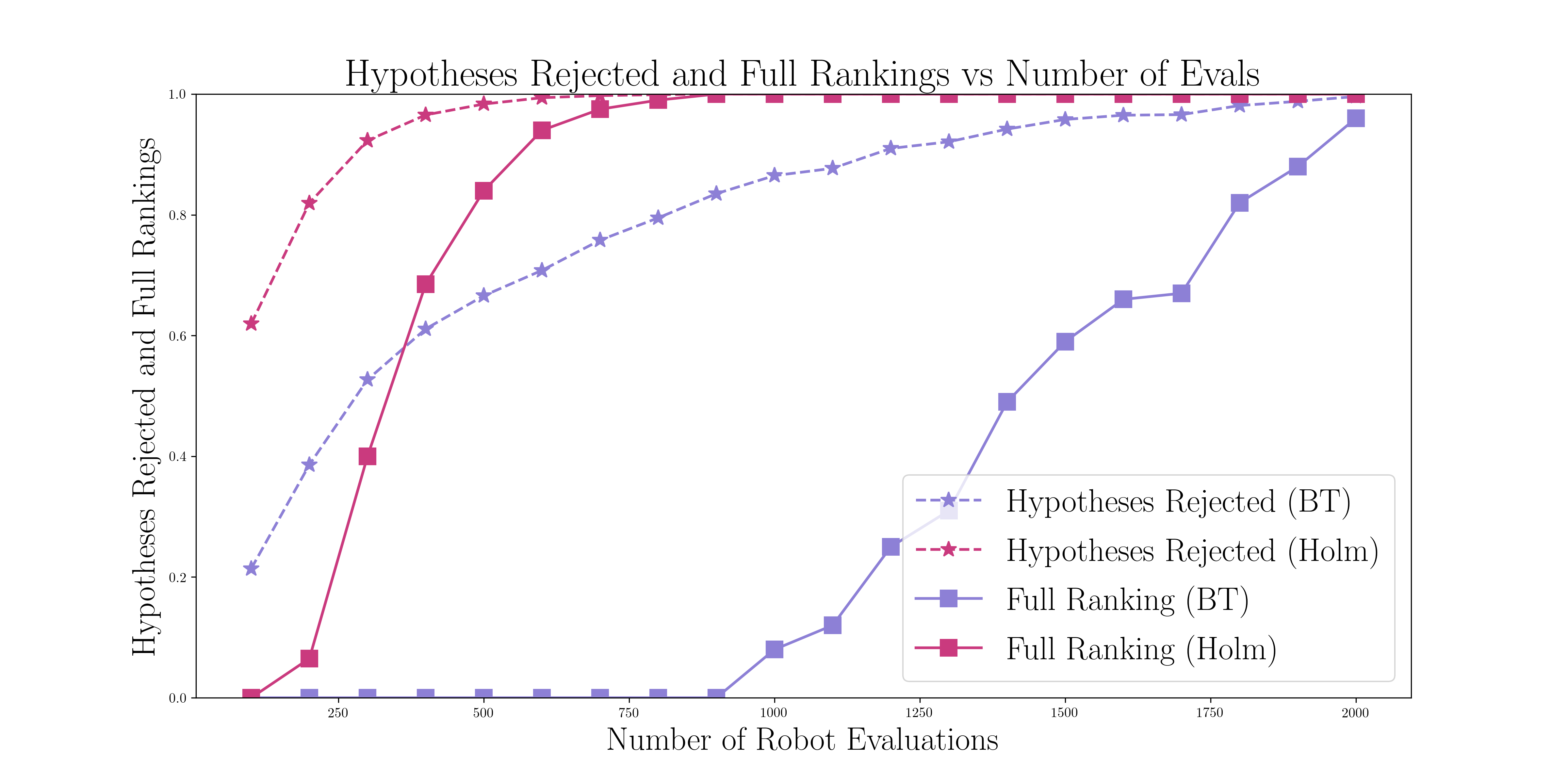}
    \caption{\textbf{Testing Statistical Power with Synthetic Data.} Setting \textbf{HO-F}: we demonstrate the improved efficiency of~$\algname_{0}$~over BT$_\alpha$ on synthetic data. Here, there are five policies corresponding to 10 pairwise comparisons. The policies have equispaced means $\{0.1, 0.3, 0.5, 0.7, 0.9\}$, resulting in an approximately homogeneous \emph{feasible} full ranking. We demonstrate here the time to $k$-hypothesis partial rankings (dashed lines) for each approach, as well as the empirical time to \emph{full ranking} (solid lines). We observe that~$\algname_{0}$~is able to better utilize the risk budget $\alpha = 0.05$ and active sampling in order to more quickly arrive at partial and complete rankings across the spectrum of sample complexity, illustrated by the blue lines being above their red counterparts.}
    \label{fig:power_homogeneous_synthetic}
\end{figure}


\subsection{Counterexample and Bradley-Terry Variants}
We repeat the empirical validation on the counterexample presented in the preceding section, but using the specific EM variant of Bradley-Terry procedures presented in \cite{atreya2025roboarena}. Using the bootstrap modification on dataset sizes of 1000 comparison draws, we observe failed FWER control on five of six simulation reruns (with indeterminate ranking on the remaining run) at nominal level $\alpha = 0.05$; this is significant at $\delta = 0.001$ against the hypothesis that boostrapped EM Bradley-Terry is indeed maintaining FWER control at level $\alpha = 0.05$, via the use of binomial confidence interval-induced tests \cite{vincent_how_2024}. As such, the EM Bradley-Terry procedure also fails to correctly rank the policies by mean performance, despite consistent ranking selection at all times. This illustrates clearly the flattening effect of comparison-based approaches -- the magnitude of performance differences can be lost, which can skew results significantly when those magnitudes are skewed or asymmetric. 

}

\end{document}